\documentclass[11pt]{article}

\usepackage[T1]{fontenc}
\usepackage{amsmath,amssymb,amsthm,mathtools,microtype}
\usepackage[margin=0.98in]{geometry}
\usepackage[dvipsnames]{xcolor}
\usepackage[nocompress]{cite}
\usepackage{algorithm,algorithmic}
\usepackage[hidelinks]{hyperref}
\usepackage[capitalise,noabbrev,nameinlink]{cleveref}
\usepackage[most]{tcolorbox}
\tcbuselibrary{listings,breakable}
\hypersetup{
  pdftitle={An Optimal Agnostic PAC Algorithm},
  pdfauthor={Markus Engelund Mathiasen, Jian Qian, Nikita Zhivotovskiy}
}

\newtcblisting{promptbox}{
  breakable,
  listing only,
  colback=gray!3,
  colframe=black!60,
  boxrule=0.5pt,
  arc=1mm,
  left=6pt,
  right=6pt,
  top=6pt,
  bottom=6pt,
  title={Full prompt for \Cref{thm:pac}, GPT-5.6 Sol Pro, August 6, 2026},
  fonttitle=\bfseries\small,
  listing options={
    basicstyle=\ttfamily\fontsize{6pt}{6.5pt}\selectfont,
    breaklines=true,
    columns=fullflexible,
    keepspaces=true,
    showstringspaces=false,
    upquote=true
  }
}

\newtheorem{theorem}{Theorem}
\newtheorem{lemma}{Lemma}[section]
\theoremstyle{remark}
\newtheorem{remark}[lemma]{Remark}
\crefname{theorem}{theorem}{theorems}
\Crefname{theorem}{Theorem}{Theorems}
\crefname{lemma}{lemma}{lemmas}
\Crefname{lemma}{Lemma}{Lemmas}
\crefname{remark}{remark}{remarks}
\Crefname{remark}{Remark}{Remarks}
\crefname{algorithm}{algorithm}{algorithms}
\Crefname{algorithm}{Algorithm}{Algorithms}

\newcommand{\E}{\mathbb E}
\newcommand{\Prob}{\mathbb P}
\newcommand{\F}{\mathcal F}
\newcommand{\Hc}{\mathcal H}
\newcommand{\one}{\mathbf 1}
\newcommand{\Rad}{\mathfrak R}
\newcommand{\VC}{\operatorname{VC}}
\newcommand{\DIS}{\operatorname{DIS}}
\newcommand{\rhoF}{\rho_{\F}}

\title{An Optimal Agnostic PAC Algorithm}
\author{%
Markus Engelund Mathiasen\thanks{Department of Computer Science, Aarhus University.
  \texttt{markusm@cs.au.dk}.}
\and
Jian Qian\thanks{Division of Artificial Intelligence and Data Science,
  School of Computing and Data Science, The University of Hong Kong.
  \texttt{jianqian@hku.hk}.}
\and
Nikita Zhivotovskiy\thanks{Department of Statistics, University of
  California, Berkeley. \texttt{zhivotovskiy@berkeley.edu}.}}
\date{August 6, 2026}

\begin{document}
\maketitle

\begin{abstract}
Let $\Hc\subseteq\{-1,+1\}^{\mathcal X}$ be a class of finite VC dimension
$d\ge1$.
Writing $L$ for the binary risk and $L^*=\min_{h\in\Hc}L(h)$, we construct
a learner achieving the statistically optimal risk bound: from an i.i.d.\
sample of size $n$, for every
$0<\delta\le 1/2$, with probability at least $1-\delta$,
\[
 L(\widehat h)
 \le L^*+
7\cdot10^8\left(
 \sqrt{\frac{L^*(d+\log(1/\delta))}{n}}
 +\frac{d+\log(1/\delta)}{n}
 \right).
\]
This settles the sample complexity of agnostic PAC learning up to universal
constants at every fixed $L^*$,
matching the lower bounds of
Devroye, Gy\"orfi, and Lugosi
[\emph{A Probabilistic Theory of Pattern Recognition}, Springer, 1996].
\end{abstract}

\section{Introduction}

Determining sharp risk bounds in PAC classification has been one of the
central problems in statistical learning theory since the introduction of VC
theory and the PAC model
\cite{VapnikChervonenkis1971,Valiant1984,
BlumerEhrenfeuchtHausslerWarmuth1989}.
Throughout, let $\Hc\subseteq\{-1,+1\}^{\mathcal X}$ be a class of measurable
hypotheses with $\VC(\Hc)=d\ge1$.  Let
$S=((X_i,Y_i))_{i=1}^n$ be an i.i.d.\ sample from the distribution of
$(X,Y)$.  We write $L(h)=\Prob(h(X)\ne Y)$ and
$L^*=\min_{h\in\Hc}L(h)$.\footnote{We assume that all maps depending on the data are
measurable and that this minimum is attained.}

Historically, bounds in this setting have been proved by two rather different
methods.  In the agnostic setting, when $L^*$ is bounded away from zero by a
constant,
empirical risk minimization is analyzed through classical empirical process
methods, including chaining and covering bounds for VC classes.  This
approach gives the optimal $\sqrt{d/n}$ rate in this regime.

In the realizable, or noiseless, case, the uniform convergence
analysis of empirical risk minimization does not give the optimal $d/n$ rate.
The seminal work of Haussler, Littlestone, and Warmuth is based on
the one-inclusion graph, and its leave-one-out analysis gives the optimal
expected rate \cite{HLW1994}.  The first optimal high probability PAC bound
was proved by Hanneke \cite{Hanneke2016Optimal}, building on the work of Simon
\cite{Simon2015}.  Several later constructions also attain
the optimal PAC rate, including bagging \cite{Larsen2023Bagging}, aggregation
of one-inclusion rules \cite{AdenAliEtAl2023Optimal}, and, very recently, a majority of three
consistent classifiers \cite{RawalZhivotovskiy2026}.  These results rely on
specific combinatorial or probabilistic arguments rather than a direct
empirical process analysis.

The remaining question is how the optimal rate changes between the
realizable and fully agnostic settings, and which algorithms attain the
intermediate rates.  The relevant expectation and deviation lower
bounds go back to Devroye, Gy\"orfi, and Lugosi
\cite{DevroyeLugosi1995,DevroyeGyorfiLugosi1996}.\footnote{The exact form of
\eqref{eq:minimax-lower} is not stated in these references.  Within the usual
nontrivial boundaries for $L^*$, it follows by retaining the bounded witness
in their dimension dependent construction and combining it with their
deviation and realizable lower bounds.  Audibert later sharpened the
corresponding expected risk bound \cite{Audibert2009}.}
There is a universal constant $c>0$ such that, for every learner, including a
randomized or improper learner, and every admissible choice of $d,n,\delta$,
and $L^*$ within the usual nontrivial boundaries, there are an instance space,
a class $\Hc$ of VC dimension $d$, and a distribution satisfying
$\min_{h\in\Hc}L(h)=L^*$ for which, with probability at least $\delta$,
\begin{equation}\label{eq:minimax-lower}
 L(\widehat h)\ge L^*+c\left(
 \sqrt{\frac{L^*(d+\log(1/\delta))}{n}}
 +\frac{d+\log(1/\delta)}{n}\right).
\end{equation}
The probability is over the sample and any internal randomness of the
learner.  This bound gives the optimal realizable rate when $L^*=0$, the usual
agnostic rate when $L^*$ is bounded away from zero, and the intermediate
rates between them.  It remained open whether any learner could attain it.

Several upper bounds nearly match this lower bound.  Hanneke, Larsen, and Zhivotovskiy
\cite{HannekeLarsenZhivotovskiy2024} proved
\[
 L(\widehat h)
 \le L^*+C\left(
 \sqrt{\frac{L^*(d+\log(1/\delta))}{n}}
 +\frac{\log^5(n/d)(d+\log(1/\delta))}{n}
 \right),
\]
where $C>0$ is a universal constant.  This is suboptimal by the
polylogarithmic factor in the fast rate term.  By comparison, the
ERM bound incurs a multiplicative $\sqrt{\log(1/L^*)}$ factor in the VC part of the square
root term and a $\log(n/d)$ factor in the VC part of the fast rate term
\cite{BoucheronBousquetLugosi2005,HannekeLarsenZhivotovskiy2024}.  Other
results give the optimal order when $L^*$ is close to zero, namely when
$L^*\lesssim(d+\log(1/\delta))/n$.  In this regime, Long's agnostic
one-inclusion graph algorithm \cite{Long1999}, combined with the probability
arguments in \Cref{sec:loo-to-pac}, achieves the optimal bound, as does the
result of Asilis, H{\o}gsgaard, and Velegkas
\cite{AsilisHogsgaardVelegkas2025}.
The resulting bounds have a leading coefficient greater than one on $L^*$,
which is harmless in this regime because the excess is absorbed by the fast
rate term.

We prove the following theorem.

\begin{theorem}[An optimal agnostic PAC learner]\label{thm:pac}
There is a deterministic, generally improper learner that uses neither $L^*$
nor $\delta$ and outputs a classifier
$\widehat h:\mathcal X\to\{-1,+1\}$ such that for every
distribution, every $n\ge1$, and every $0<\delta<1$, with probability at
least $1-\delta$,
\begin{equation}\label{eq:pac-main}
 L(\widehat h)\le L^*+2\cdot10^8\left(
 \sqrt{\frac{L^*(d+\log(48/\delta))}{n}}
 +\frac{d+\log(48/\delta)}{n}\right).
\end{equation}
\end{theorem}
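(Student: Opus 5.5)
We plan to combine two ingredients: a localized empirical-process analysis in the agnostic regime, and a realizable-style combinatorial argument in the regime where the fast-rate term matters. The learner is built in two stages from disjoint parts of the sample.

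\textbf{Stage one (localization).} We split the sample into a constant number of disjoint blocks. On the first block we run ERM and use its empirical error $\widehat L$ to form a data-driven proxy for $L^*$. Standard relative-deviation (Bernstein-type) bounds for VC classes give
\[
 \widehat L \lesssim L^* + \frac{d\log(n/d)+\log(1/\delta)}{n},
\]
and conversely $L^* \lesssim \widehat L + \frac{d\log(n/d)+\log(1/\delta)}{n}$. We then define the localized set $\Hc'=\{h\in\Hc:\ \widehat L_2(h)\le \widehat L_2(\widehat h_{\rm ERM})+\tau\}$, with $\tau$ of order $\sqrt{\widehat L\, d/n}+d/n$, computed on a second block. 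Only $\widehat L$ enters $\tau$, so the learner uses neither $L^*$ nor $\delta$; the $\delta$-dependence appears only in the analysis, through the failure probabilities of the blocks. This accounts for the constant $48$ in $\log(48/\delta)$.

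\textbf{Stage two (disagreement reduction).} The key observation is that every $h\in\Hc'$ is close to $h^*$ in disagreement mass, namely $\Prob(h\ne h^*)\lesssim L^*+\text{(excess)}$. On the disagreement region $\DIS(\Hc')$, the problem therefore behaves like learning with total noise of order $L^*$. We will use Talagrand's inequality together with a chaining bound for the localized Rademacher complexity $\Rad$ of $\{\one_{h\ne Y}-\one_{h^*\ne Y}\}$ over a ball of radius $r$. For VC classes without logarithmic loss, the Haussler packing bound gives $\Rad(r)\lesssim\sqrt{r d/n}$ whenever $r\gtrsim d/n$ (which we will confirm). The troublesome logarithmic factors arise only when $r\ll \log(n/d)\,d/n$, that is, in the near-realizable regime.

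To remove these logs, the final output on a third block will be a \emph{majority vote}, or a leave-one-out-based aggregate in the spirit of the one-inclusion graph, of classifiers trained on $\Hc'$. The earlier section converting leave-one-out bounds to PAC bounds (\Cref{sec:loo-to-pac}) supplies the deviation version. Applying Long's agnostic one-inclusion bound restricted to $\Hc'$ yields
\[
 L \le C L^* + C\,\frac{d+\log(1/\delta)}{n}.
\]
This does not suffice on its own, because the leading constant exceeds $1$. We will therefore apply it only to the \emph{excess conditional on the disagreement region}: we compare against $h^*$ on $\DIS(\Hc')$ and weight by $\Prob(\DIS(\Hc'))\lesssim L^*+\varepsilon$. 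The multiplicative constant then multiplies the mass of the region rather than $L^*$ itself.

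\textbf{Main obstacle.} The crux is exactly this last step: obtaining coefficient $1$ on $L^*$ while keeping a log-free fast term. The plan is to show that the relevant disagreement mass is at most $\sqrt{L^* d/n}+d/n$, not merely $L^*$. To do so, we will try a second localization round. Inside $\Hc'$, the pairwise disagreement between near-minimizers is controlled by the Bernstein condition of the excess-loss class restricted to the ball. We will iterate this peeling, with a geometric sequence of radii and a union bound whose weights sum to a constant, and handle the smallest scales with the one-inclusion/LOO argument. If the disagreement mass cannot be bounded this way, we would fall back on randomized-majority aggregation over independent blocks and derandomize by a majority vote, at the price of constant factors.
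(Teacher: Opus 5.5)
There is a genuine gap, and it sits at the step you call the main obstacle: the two facts you plan to use there are false for general VC classes.

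\textbf{No log-free localized bound.} There is no bound of order $\sqrt{rd/n}$ for the localized Rademacher average, even when $r\gg d/n$. For $d=1$, take disjoint sets $A_1,\ldots,A_N$ with $N=\lfloor 1/r\rfloor$ and $\Prob(X\in A_j)=r$. Let the class consist of the hypotheses $2\one_{A_j}-1$ together with the constant $-1$. Each $2\one_{A_j}-1$ is at disagreement distance $r$ from the constant $-1$. Yet the localized Rademacher average of the corresponding excess-loss class is of order $\sqrt{r\log(1/r)/n}$ once $nr\gtrsim\log(1/r)$. Haussler's packing bound is global: at the scale of the ball it gives log-covering numbers of order $d\log(1/r)$, so it produces this factor rather than removing it. The factor appears at every scale, not only when $r\ll d\log(n/d)/n$. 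It is exactly the source of the $\sqrt{\log(1/L^*)}$ loss of ERM, and any localization around empirical minimizers inherits it.

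\textbf{No small disagreement region.} Near-minimizers need not be close in disagreement mass. Agnostically there is no Bernstein condition, only $\Prob(h(X)\ne h^*(X))\le L(h)+L(h^*)$. Split a region $R$ with $\Prob(X\in R)=2L^*$ into $d$ atoms. Let $Y$ be a fair coin on $R$ and equal to $+1$ off $R$. Let $\Hc$ consist of all $2^d$ sign patterns on the atoms, extended by $+1$ off $R$. Then $\VC(\Hc)=d$ and every $h\in\Hc$ has risk exactly $L^*$, so no excess-risk localization or peeling can separate them. Indeed, $\Hc'$ is defined through empirical risks, and flipping the labels on one atom is a symmetry of this distribution. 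Hence each $h\in\Hc$ lies in $\Hc'$ with the same probability. If that probability is at least $1-\delta'$, as your analysis needs for $h^*$, then $\DIS(\Hc')$ has expected mass at least $2(1-2\delta')L^*$ for every $n$, not $O(\sqrt{L^*d/n}+d/n)$. Applying Long's bound conditionally on $\DIS(\Hc')$ therefore only moves the constant $C>1$ onto the part of $L^*$ incurred on that region, which can be all of $L^*$.

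\textbf{Fallback and conversion.} Your fallback fails for the same reason: majority-vote derandomization can lose a factor of two on $L^*$. Also, \Cref{sec:loo-to-pac} cannot be invoked on top of Long's bound, since it presupposes a leave-one-out bound with coefficient one.

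\textbf{The missing idea.} You need coefficient one already for a transductive leave-one-out rule on the whole Boolean cube, with no localization of ERM at all. \Cref{lem:edge-rad} builds a fractional orientation by random restriction. Here $w_{v,p}$ is the probability, over a uniformly random coordinate set $T$ conditioned on $p\notin T$, that the best fit to $v$ on $T$ disagrees with $v$ at $p$. Hiding $p$ makes both endpoints of the edge select the same fit, so $w_{v,p}+w_{v^{\oplus p},p}=1$. A Rademacher identity then gives $\sum_{p\in D}w_{v,p}\le\rhoF(v)+\Rad_D(\F)$ simultaneously for every $D\subseteq P$. Combine this with $\Rad_D(\F)\le60\sqrt{d|D|}$, the $1$-Lipschitz property of $\rhoF$, Cauchy--Schwarz, and Hall's theorem. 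The result is an orientation with outdegree at most $\rhoF(v)+120\sqrt{d\rhoF(v)}+7202d$, that is, $\widehat L_S^{\mathrm{LOO}}\le\widehat L_S^*+120\sqrt{d\widehat L_S^*/n}+7202d/n$ (\Cref{thm:loo}). Long's orientation carries coefficient $15$ on $\rhoF$, which is precisely what cannot be repaired by restricting to a subregion.

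\textbf{The conversion and derandomization.} The paper then symmetrizes this rule and averages the scores trained on the prefixes of lengths $k,\ldots,2k-1$. It controls the average by a reverse martingale (Freedman, with variance governed by the comparator's errors) and a forward martingale (Bernstein, with conditional variance at most $L(q_t)+L(h)$). Derandomization is not by majority vote. Instead, it is empirical risk minimization over the thresholds $x\mapsto2\one\{\widehat p(x)\ge u\}-1$ on a separate validation block. Averaging over $u$ shows that the best threshold has risk at most $L(\widehat p)$. The nested structure of the threshold chain gives a log-free local entropy bound (\Cref{lem:threshold-validation}), so the coefficient one on $L^*$ survives with only additive terms of the optimal order.
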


Together with the matching lower bound in \eqref{eq:minimax-lower},
\Cref{thm:pac} determines, up to universal constants, the distribution-free
high probability minimax excess risk bound for binary classification at every
fixed $L^*$.
Our construction remains close to Long's original idea of orienting the entire
Boolean cube in the agnostic one-inclusion graph algorithm and deriving the
corresponding leave-one-out bound.  The key new ingredient is
\Cref{lem:edge-rad}, a class dependent edge
isoperimetric inequality that controls induced edge counts through
approximation errors and projected Rademacher widths.  Once this orientation is available, we
apply suffix averaging with variance control, adapted from Aden-Ali,
Cherapanamjeri, Shetty, and Zhivotovskiy
\cite{AdenAliEtAl2023Optimal}, to rules trained on successive prefixes of the
sample.  A final thresholding step derandomizes the predictor and gives the
desired high probability bound.

\section{Edge isoperimetry and the optimal leave-one-out bound}

This section proves the optimal leave-one-out bound used in
\Cref{thm:pac}.  \Cref{sec:loo-to-pac} converts the resulting leave-one-out
rule into an optimal PAC learner.
We first introduce the cube notation for the one-inclusion analysis.  Let
$P$ be a nonempty finite coordinate set and let $V_P=\{-1,+1\}^P$.  For $v\in V_P$ and
$p\in P$, let $v^{\oplus p}$ be obtained from $v$ by flipping coordinate
$p$.  The Boolean cube $G_P=(V_P,E_P)$ has
\[
 E_P=\bigl\{\{v,v^{\oplus p}\}:v\in V_P,\ p\in P\bigr\}.
\]
For $U\subseteq V_P$, let $E_U=\{e\in E_P:e\subseteq U\}$ be the cube
edges with both endpoints in $U$.  For $v\in V_P$, let
$D_U(v)=\{p\in P:v^{\oplus p}\in U\}$ be the coordinate directions from
$v$ to a neighbor in $U$.  Thus $G_P[U]=(U,E_U)$ and
$\deg_U(v)=|D_U(v)|$ for $v\in U$.  In particular, $G_P[\F]$ is the one-inclusion graph
of a trace $\F\subseteq V_P$ \cite{HLW1994}.  Orienting the full cube with
vertex bounds depending on distance to the trace goes back to Long
\cite[Lemmas~15 and 16]{Long1999} and also appears in later agnostic analyses
\cite{AsilisEtAl2024,DughmiKalayciYork2025}.  The sets $U\subseteq V_P$
will be the cuts in the Hall condition below.

Fix a nonempty $\F\subseteq V_P$ with $\VC(\F)\le d$.  For $u,v\in V_P$,
define their disagreement set by $\DIS(u,v)=\{p\in P:u(p)\ne v(p)\}$.  The Hamming distance from $v$ to
$\F$ is
\[
 \rhoF(v)=\min_{f\in\F}|\DIS(v,f)|.
\]
For every $v,v'\in V_P$ with $|\DIS(v,v')|=1$, the triangle inequality gives
\begin{equation}\label{eq:rho-lipschitz}
 |\rhoF(v)-\rhoF(v')|\le1.
\end{equation}
For $D\subseteq P$, let
\[
\Rad_D(\F)=\E_\varepsilon\max_{f\in\F}
 \sum_{p\in D}\varepsilon_p f(p)
\]
be the unnormalized signed Rademacher width of the projection $\F|_D$.
Keeping the constants in the chaining and covering arguments of Devroye and
Lugosi \cite[Sections~3.2 and 4.3]{DevroyeLugosi2001} gives
\begin{equation}\label{eq:vc-width}
 \Rad_D(\F)\le60\sqrt{d|D|}.
\end{equation}
Up to a universal constant, the same bound follows by combining the VC
covering number bound of Dudley \cite{Dudley1978} with chaining. This is the only point at which the VC dimension enters our proof.  By
contrast, the realizable one-inclusion analysis of Haussler, Littlestone, and
Warmuth uses the VC dimension directly to control induced edge counts
\cite{HLW1994}.

The following lemma is the key connection between induced edge
counts on the Boolean cube, the Hamming distance to $\F$, and projected
Rademacher widths.  We discuss its isoperimetric interpretation and partial
precedents after the proof.

\begin{lemma}[Class dependent edge isoperimetry on the Boolean cube]
\label{lem:edge-rad}
There are weights $w_{v,p}\in[0,1]$, fixed simultaneously for all
$v\in V_P$ and $p\in P$ with the following properties.  The first relation below holds for every
$v\in V_P$ and $p\in P$, and the second for every $v\in V_P$ and
$D\subseteq P$:
\begin{align}
 &w_{v,p}+w_{v^{\oplus p},p}=1,
 \label{eq:fractional-edge}\\
 &\sum_{p\in D}w_{v,p}\le \rhoF(v)+\Rad_D(\F).
 \label{eq:edge-rad}
\end{align}
Consequently, for every $U\subseteq V_P$, the induced subgraph satisfies
the localized edge bound
\begin{equation}\label{eq:edge-cut}
 |E_U|\le\sum_{v\in U}\left(\rhoF(v)+\Rad_{D_U(v)}(\F)\right).
\end{equation}
\end{lemma}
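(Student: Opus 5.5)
The plan is to build the weights from an explicit \emph{antisymmetric potential}. If no clean potential works, I will fall back on LP duality. In both cases \eqref{eq:edge-cut} comes last and is cheap.

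\textbf{Consequence from the weights.} Granting \eqref{eq:fractional-edge} and \eqref{eq:edge-rad}, each edge $\{v,v^{\oplus p}\}\in E_U$ has total weight $w_{v,p}+w_{v^{\oplus p},p}=1$ split between its two endpoints, both of which lie in $U$. Hence
\[
|E_U|=\sum_{v\in U}\sum_{p\in D_U(v)}w_{v,p},
\]
and applying \eqref{eq:edge-rad} with $D=D_U(v)$ at each $v\in U$ gives \eqref{eq:edge-cut}.

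\textbf{Step 1: the ansatz.} I take
\[
w_{v,p}=\tfrac12\bigl(1+g(v)-g(v^{\oplus p})\bigr),
\]
where $g:V_P\to\mathbb R$ is $1$-Lipschitz along cube edges. This makes \eqref{eq:fractional-edge} automatic and puts every $w_{v,p}$ in $[0,1]$. Condition \eqref{eq:edge-rad} then becomes
\[
\tfrac12\sum_{p\in D}\bigl(1+g(v)-g(v^{\oplus p})\bigr)\le\rhoF(v)+\Rad_D(\F).
\]

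\textbf{Why the obvious potential fails.} The choice $g=\rhoF$ is Lipschitz by \eqref{eq:rho-lipschitz}. It handles vertices far from $\F$, since there each summand is at most $1$ and the left side is at most $|D|$ in the worst case. At $v\in\F$, however, the left side equals half the number of $\F$-neighbours of $v$ in directions $D$. For a star class this is of order $|D|$, while $\Rad_D(\F)=O(1)$. So the potential must be smoothed by Rademacher averaging.

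\textbf{Step 2: the averaged potential.} I would use
\[
g(v)=\E_\varepsilon\min_{f\in\F}\Bigl(|\DIS(v,f)|-\tfrac12\textstyle\sum_{p}\varepsilon_p\,(\text{correction in }f(p))\Bigr),
\]
or equivalently define $w_{v,p}$ as the probability, over a uniform sign vector $\varepsilon$, that the $\varepsilon$-perturbed nearest point of $\F$ to $v$ disagrees with $v$ at $p$. Tie-breaking by $\varepsilon$ should make this symmetric in the sense of \eqref{eq:fractional-edge}. The intended estimate is
\[
\sum_{p\in D}w_{v,p}\le \E\,|\DIS(v,f_\varepsilon)\cap D|\le\rhoF(v)+\E_\varepsilon\max_{f}\sum_{p\in D}\varepsilon_pf(p).
\]
Here the first term comes from the distance to $\F$, and the fluctuation term is recognised as the projected width $\Rad_D(\F)$ by conditioning on $\varepsilon$ outside $D$. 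That conditioning uses the fact that $\Rad_D$ is a maximum over $\F$ of a linear form in $\varepsilon|_D$.

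\textbf{Step 3: the fallback via LP duality.} If no explicit formula verifies \eqref{eq:fractional-edge} exactly, I would treat $\{w_{v,p}\}$ as a feasibility LP. By Farkas, infeasibility yields nonnegative multipliers $\lambda_{v,D}$ on the constraints \eqref{eq:edge-rad}. For each edge, the worse endpoint's multiplier mass has to exceed $\sum\lambda_{v,D}\bigl(\rhoF(v)+\Rad_D(\F)\bigr)$. I would then contradict this with the randomized orientation of Step 2, averaged against $\lambda$. This works because any random orientation satisfying the inequality \emph{in expectation} at every $(v,D)$ is itself a feasible fractional solution.

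\textbf{Main obstacle.} The crux is Step 2: making the probabilistic orientation exactly antisymmetric on every cube edge while keeping the per-vertex bound linear in $\rhoF(v)$. The difficulty is that the nearest point of $\F$ to $v$ and to $v^{\oplus p}$ can differ. My first attempt would be to define the orientation of the edge $\{v,v^{\oplus p}\}$ by comparing $\min_{f}$ of the $\varepsilon$-perturbed objective restricted to $f(p)=+1$ versus $f(p)=-1$. This depends only on the edge, not on the endpoint, so antisymmetry holds by construction. I would then check that $v$ is charged only when its perturbed nearest point disagrees with $v$ at $p$.
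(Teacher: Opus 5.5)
Your derivation of \eqref{eq:edge-cut} from \eqref{eq:fractional-edge} and \eqref{eq:edge-rad} is correct, and it is the paper's final step. The gap is that the weights themselves are never constructed, and both concrete readings of Step~2 fail.

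Read your perturbation as $|\DIS(v,f)|-\sum_q\varepsilon_q\one\{q\in\DIS(v,f)\}=2|\DIS(v,f)\cap T|$ with $T=\{q:\varepsilon_q=-1\}$. Then $f_\varepsilon$ minimizes disagreements with $v$ on a uniformly random half $T$.

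\emph{The probabilistic definition fails \eqref{eq:fractional-edge}.} Take $w_{v,p}=\Prob(p\in\DIS(v,f_\varepsilon))$ and $\F=\{f,f^{\oplus p}\}$. On $\{p\in T\}$, each of $v,v^{\oplus p}$ is fitted by the member agreeing with it at $p$. On $\{p\notin T\}$, the two objectives coincide because $v|_T=v^{\oplus p}|_T$. So any tie rule driven by $\varepsilon$ gives both endpoints the same fit, which disagrees with exactly one of them. Hence $w_{v,p}+w_{v^{\oplus p},p}=\tfrac12$. For the same reason, the first inequality of your chain, $\sum_{p\in D}w_{v,p}\le\E|\DIS(v,f_\varepsilon)\cap D|$, is incompatible with \eqref{eq:fractional-edge}: take $D=\{p\}$. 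The unconditional quantity does satisfy your second inequality, even with $\tfrac12\Rad_D(\F)$, but it is too small to split every edge.

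\emph{The potential version fails \eqref{eq:edge-rad}.} With the same perturbation, your potential is $g(v)=2\E_T\min_f|\DIS(v,f)\cap T|$. It is $1$-Lipschitz, but it vanishes on $\F$. So, exactly like $g=\rhoF$, it puts weight $\tfrac12$ on every edge inside $\F$ and fails at the centre of the star. In particular, your two descriptions in Step~2 are not equivalent.

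The missing idea is to keep only the event that $p$ is hidden, and double it. Let $\pi_T(v)$ be a fixed-order minimizer of $|\DIS(v,f)\cap T|$; it depends only on $v|_T$. The paper sets $w_{v,p}=\Prob_T(p\in\DIS(v,\pi_T(v))\mid p\notin T)$. On $\{p\notin T\}$ both endpoints share the same fit, which gives \eqref{eq:fractional-edge} exactly.

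Your ``main obstacle'' paragraph is heading in this direction, but two things are missing. First, your edge comparison is independent of the endpoint only if coordinate $p$ is removed from the objective. Second, a per-vertex bound needs all these fits to come from a single random $T$, so that $\sum_{p\in D}w_{v,p}=2\E|B\cap D\cap T^c|$ with $B=\DIS(v,\pi_T(v))$.

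The factor $2$ is then absorbed by the identity $|B\cap D\cap T^c|=|B\cap D\cap T|+\sum_{p\in D}\varepsilon_p\one\{p\in B\}$, where $\varepsilon_p=+1$ off $T$ and $-1$ on $T$. This trades hidden disagreements for visible ones:
\begin{itemize}
\item the visible part is controlled by $2\E|B\cap T|=2\E\min_f|\DIS(v,f)\cap T|\le\rhoF(v)$;
\item the remainder is at most $2\E\max_f\sum_{p\in D}\varepsilon_p\one\{p\in\DIS(v,f)\}=\Rad_D(\F)$.
\end{itemize}
This bookkeeping is what yields coefficient one on $\rhoF(v)$, and it is absent from your plan.

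Finally, Step~3 is circular. A random orientation satisfying every constraint in expectation is already a solution, and without one, Farkas leaves you nothing to contradict.
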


\begin{proof}
Let $T$ contain each coordinate of $P$ independently with probability
$1/2$.  Since $\F$ is finite, fix any order on it.  For each $T\subseteq P$, let $\pi_T(v)$ be
the first member of $\F$ minimizing $|\DIS(v,f)\cap T|$.  In particular,
\begin{equation}\label{eq:consistency}
 v|_T=v'|_T
 \quad\text{implies}\quad
 \pi_T(v)=\pi_T(v').
\end{equation}
Define the weights
\begin{equation}\label{eq:weight}
 w_{v,p}
 =\Prob_T\left(p\in\DIS(v,\pi_T(v))\mid p\notin T\right).
\end{equation}
Clearly $w_{v,p}\in[0,1]$.  Suppose $p\notin T$.  Then
$v|_T=v^{\oplus p}|_T$, so \eqref{eq:consistency} gives
\[
 \pi_T(v)=\pi_T(v^{\oplus p}).
\]
The labelings $v$ and $v^{\oplus p}$ have opposite labels at $p$, while
this common fit has one fixed label there.  Hence
\[
 \one\{p\in\DIS(v,\pi_T(v))\}
 +\one\{p\in\DIS(v^{\oplus p},\pi_T(v^{\oplus p}))\}=1.
\]
Taking conditional expectation given $p\notin T$ proves
\eqref{eq:fractional-edge}. It remains to prove \eqref{eq:edge-rad}.  Fix $v\in V_P$ and
$D\subseteq P$, and set
\[
 \varepsilon_p=
 \begin{cases}
  +1,&p\notin T,\\
  -1,&p\in T.
 \end{cases}
\]
Then $(\varepsilon_p)_{p\in P}$ are independent Rademacher signs.  First,
\[
 \sum_{p\in D}w_{v,p}
 =2\E_T\bigl|\DIS(v,\pi_T(v))\cap D\cap T^c\bigr|.
\]
Indeed, $\Prob_T(p\notin T)=1/2$, so the conditional probability in
\eqref{eq:weight} equals twice the corresponding unconditional probability.
Next, applying the pointwise identity
\[
 |B\cap D\cap T^c|
 =|B\cap D\cap T|
  +\sum_{p\in D}\varepsilon_p\one\{p\in B\}
\]
to $B=\DIS(v,\pi_T(v))$ gives
\begin{align}
 \sum_{p\in D}w_{v,p}
 &=2\E_T\bigl|\DIS(v,\pi_T(v))\cap D\cap T\bigr|
   +2\E_T\sum_{p\in D}\varepsilon_p
       \one\{p\in\DIS(v,\pi_T(v))\}\notag\\
 &\le 2\E_T\bigl|\DIS(v,\pi_T(v))\cap T\bigr|
   +2\E_T\max_{f\in\F}\sum_{p\in D}\varepsilon_p
       \one\{p\in\DIS(v,f)\}.
 \label{ineq:key}
\end{align}
For the first term, we dropped the restriction to $D$; for the second, we
replaced the particular choice $\pi_T(v)\in\F$ by the maximum over
$f\in\F$.  By the definition of $\pi_T(v)$,
\[
 \begin{aligned}
 2\E_T\bigl|\DIS(v,\pi_T(v))\cap T\bigr|
 &=2\E_T\min_{f\in\F}|\DIS(v,f)\cap T|\\
 &\le\min_{f\in\F}2\E_T|\DIS(v,f)\cap T|
 =\rhoF(v).
 \end{aligned}
\]
Meanwhile, since
\[
 2\one\{p\in\DIS(v,f)\}=1-v(p)f(p),
\]
we have
\begin{align*}
 2\E_T\max_{f\in\F}\sum_{p\in D}\varepsilon_p
       \one\{p\in\DIS(v,f)\}
 &=\E_T\sum_{p\in D}\varepsilon_p
   +\E_T\max_{f\in\F}\sum_{p\in D}
       (-\varepsilon_pv(p))f(p)\\
 &=\Rad_D(\F).
\end{align*}
Here $\E_T\sum_{p\in D}\varepsilon_p=0$, and
$(-\varepsilon_pv(p))_{p\in D}$ are again independent Rademacher signs.
Combining the last two displays with \eqref{ineq:key} proves
\eqref{eq:edge-rad}.  Finally, every edge $\{v,v^{\oplus p}\}\in E_U$
contributes
$w_{v,p}+w_{v^{\oplus p},p}=1$ to the incidence sum.  Hence we have
\[
 |E_U|=\sum_{v\in U}\sum_{p\in D_U(v)}w_{v,p}.
\]
Applying \eqref{eq:edge-rad} with $D=D_U(v)$ and summing over $v\in U$
proves \eqref{eq:edge-cut}.
\end{proof}

\begin{remark}[Relation to cube edge isoperimetry]
For every nonempty $U\subseteq V_P$, the edge isoperimetric inequality on
the cube \cite[Theorem~2.39]{ODonnell2014} gives
\[
 |E_U|\le\frac{|U|}{2}\log_2|U|.
\]
For a subcube of dimension $k$, both sides equal $k2^{k-1}$. 
\Cref{lem:edge-rad} gives instead the following localized bound
relative to any $\F\subseteq V_P$:
\[
 |E_U|\le\sum_{v\in U}
 \left(\rhoF(v)+\Rad_{D_U(v)}(\F)\right).
\]
The quantity on the right depends on the position of $U$ relative to $\F$
and on the internal coordinate sets $D_U(v)$.  In general, neither bound dominates
the other.
\end{remark}

\begin{remark}
Several earlier results capture different parts of \Cref{lem:edge-rad}.
\begin{itemize}
\item For every nonempty $U\subseteq\F$, Haussler, Littlestone, and Warmuth
\cite{HLW1994} proved
\[
 |E_U|\le\VC(U)|U|\le d|U|.
\]
The inclusion $U\subseteq\F$ is essential for the second inequality and
leaves an arbitrary $U\subseteq V_P$ uncontrolled by $d$ alone.

\item Long's cube orientation \cite[Lemmas~15 and 16]{Long1999} applies to every
$U\subseteq V_P$ and gives
\[
 |E_U|\le15\sum_{v\in U}\left(d+\rhoF(v)\right).
\]
The construction uses distance shells but does not give a Rademacher width
restricted to a coordinate set.  More importantly, the coefficient $15$ on
$\rhoF(v)$ prevents a leave-one-out bound with coefficient one on the
empirical optimum.

\item Asilis, Devic, Dughmi, Sharan, and Teng
\cite[Appendix~E.3, proof of Proposition~69]{AsilisEtAl2024} use the same
omitted coordinate mechanism to define a fractional orientation.  Hiding
the edge coordinate makes its endpoints conditionally identical, and averaging
gives \eqref{eq:fractional-edge}.  Their argument does
not give \eqref{eq:edge-rad} simultaneously for every $D\subseteq P$.

\item For every $U\subseteq V_P$, Dughmi, Kalayci, and York prove the
all-coordinate bound
\[
 |E_U|\le\sum_{v\in U}\rhoF(v)
 +\frac{|U|}{2}\Rad_P(\F),
\]
with equality for $U=V_P$; see
\cite[Lemmas~8 and 9, and Section~3.4]{DughmiKalayciYork2025}.
Their inequality uses the single full width $\Rad_P(\F)$.
\Cref{lem:edge-rad} instead controls one assignment for every
$D\subseteq P$, allowing the vertexwise choice $D=D_U(v)$.
\end{itemize}
\end{remark}

\Cref{lem:edge-rad} is converted into an orientation by the
Hall argument commonly used in agnostic leave-one-out analyses.  We recall the
required definitions.

An orientation of a finite graph $G=(V,E)$ chooses a head
$\sigma(e)\in e$ for each edge.  Its outdegree at $x$ is
$\operatorname{out}(x;\sigma)
=|\{e\in E:x\in e,\ \sigma(e)\ne x\}|$.
For $W\subseteq V$, write $E_G(W)=\{e\in E:e\subseteq W\}$.  A form of
Hall's theorem states that if integer capacities $c_x\ge0$ satisfy the
following inequality for every $W\subseteq V$,
\[
 |E_G(W)|\le\sum_{x\in W}c_x,
\]
then every edge can be assigned to one endpoint so that, for every $x\in V$,
at most $c_x$ edges are assigned to $x$
\cite[Theorem~2.1.2]{Diestel2017}.  Taking the assigned
endpoint as the tail gives an orientation with outdegree at most $c_x$.
Combining \Cref{lem:edge-rad} with \eqref{eq:vc-width} and this Hall
condition yields the
optimal agnostic leave-one-out bound.

For a labeled sequence
$S=((x_i,y_i))_{i=1}^n\in(\mathcal X\times\{-1,+1\})^n$, a leave-one-out rule
observes all inputs and all labels except $y_i$ when predicting $\widehat y_i$.
We denote
\[
 \widehat L_S^*
 =\frac1n\min_{h\in\Hc}\sum_{i=1}^n\one\{h(x_i)\ne y_i\},\quad \text{and} \quad
 \widehat L_S^{\mathrm{LOO}}
 =\frac1n\sum_{i=1}^n\one\{\widehat y_i\ne y_i\}.
\]

\begin{theorem}[Orientation of the cube and the leave-one-out bound]\label{thm:loo}
Let $P$ be a nonempty finite coordinate set, let $d\ge1$, and let
$\varnothing\ne\F\subseteq V_P$ satisfy $\VC(\F)\le d$.  There is a
deterministic orientation $\sigma$ of the full cube such that, for
every $v\in V_P$,
\begin{equation}\label{eq:orientation-main}
 \operatorname{out}(v;\sigma)
 \le \rhoF(v)+120\sqrt{d\rhoF(v)}+7202d.
\end{equation}
Consequently, for every $n\ge1$ and every input sequence
$x_1,\ldots,x_n$, allowing repetitions, there is a deterministic, generally
improper leave-one-out rule depending only on the unlabeled sequence such that,
for every $(y_1,\ldots,y_n)\in\{-1,+1\}^n$,
\begin{equation}\label{eq:loo-main}
 \widehat L_S^{\mathrm{LOO}}
 \le \widehat L_S^*+120\sqrt{\frac{d\widehat L_S^*}{n}}
      +7202\frac dn.
\end{equation}
\end{theorem}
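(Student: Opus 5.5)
The plan is to obtain the orientation from the Hall form of \cite[Theorem~2.1.2]{Diestel2017}. We take integer capacities
\[
c_v=\Bigl\lfloor \rhoF(v)+120\sqrt{d\rhoF(v)}+7202d\Bigr\rfloor ,
\]
and verify $|E_U|\le\sum_{v\in U}c_v$ for every $U\subseteq V_P$. Taking the assigned endpoint as the tail then gives \eqref{eq:orientation-main}. The starting point is the exact identity from the proof of \Cref{lem:edge-rad},
\[
|E_U|=\sum_{v\in U}f_v,\qquad f_v:=\sum_{p\in D_U(v)}w_{v,p}.
\]
By \eqref{eq:edge-rad} and \eqref{eq:vc-width}, each term satisfies
\[
f_v\le\rhoF(v)+60\sqrt{d k_v},\qquad k_v=\deg_U(v).
\]
The difficulty is that $k_v$ is unbounded, so the square-root term must be absorbed using the second identity $|E_U|=\tfrac12\sum_{v\in U}k_v$ without inflating the coefficient $1$ on $\rhoF(v)$.

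For the absorption, pick a weight $\alpha_v\in(0,1]$ for each vertex, for instance $\alpha_v=\min\{1,\sqrt{d/(\rhoF(v)+d)}\}$ up to constants. Consider the combination
\[
\sum_v f_v=\sum_v\Bigl[(1+\alpha_v)f_v-\alpha_v\tfrac{k_v}{2}\Bigr]+\sum_v\alpha_v\Bigl(\tfrac{k_v}{2}-f_v\Bigr).
\]
For the first sum, AM--GM gives
\[
(1+\alpha)60\sqrt{dk}-\alpha k/2\le 1800(1+\alpha)^2d/\alpha .
\]
So each bracket is at most
\[
\rhoF(v)+\alpha_v\rhoF(v)+7200d/\alpha_v,
\]
and with the choice of $\alpha_v$ this is $\rhoF(v)+O(\sqrt{d\rhoF(v)})+O(d)$. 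Tracking constants, $\alpha\rho+3600d/\alpha$ at $\alpha=\sqrt{d/\rho}$ is roughly $60\sqrt{d\rho}$ each, so the target constants $120$ and $7202$ look reachable with some slack.

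The main obstacle is the second sum. If $\alpha$ were constant it would vanish exactly, but a constant $\alpha$ cannot give the $\sqrt{d\rhoF}$ term for all distances at once. I would regroup it over edges using \eqref{eq:fractional-edge}. An edge $\{v,v^{\oplus p}\}\in E_U$ contributes
\[
\tfrac{\alpha_v+\alpha_{v'}}2-\alpha_vw_{v,p}-\alpha_{v'}w_{v',p},
\]
which is at most $|\alpha_v-\alpha_{v'}|/2$ in absolute value because $w_{v,p}+w_{v',p}=1$. By \eqref{eq:rho-lipschitz}, $|\alpha_v-\alpha_{v'}|\lesssim\sqrt d\,(\rhoF(v)+d)^{-3/2}$. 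The error charged to $v$ is therefore about $k_v\sqrt d(\rhoF(v)+d)^{-3/2}\le \alpha_v k_v/(c\,d)$, a tiny fraction of the $-\alpha_vk_v/2$ slack already present in the bracket. I would fold it into the AM--GM step by replacing $\alpha k/2$ with $\alpha k/2-\alpha k/(cd)$, at the cost of slightly worse constants. If this bookkeeping fails, the fallback is to make $\alpha$ piecewise constant on dyadic shells of $\rhoF$. Edges crossing between shells join vertices whose distances differ by $1$, so the jump in $\alpha$ across such an edge is small and can be charged the same way.

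For \eqref{eq:loo-main}, take $P=\{1,\dots,n\}$, which handles repetitions, and let $\F=\{(h(x_i))_{i\le n}:h\in\Hc\}$. This trace has $\VC(\F)\le d$, since a set of indices shattered by $\F$ has distinct $x$'s that $\Hc$ shatters; repeated inputs force equal coordinates and cannot be shattered. Fix the orientation $\sigma$ from the first part. Given the inputs and $y_{-i}$, the two completions $y$ and $y^{\oplus i}$ form a cube edge, and the rule predicts $\widehat y_i$ as the $i$-th label of the head $\sigma(\{y,y^{\oplus i}\})$. This depends only on the unlabeled sequence, is deterministic, and is generally improper. A mistake at $i$ occurs exactly when the edge in direction $i$ is directed away from $y$, so $n\widehat L_S^{\mathrm{LOO}}=\operatorname{out}(y;\sigma)$. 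Since $\rhoF(y)=n\widehat L_S^*$, dividing \eqref{eq:orientation-main} by $n$ gives \eqref{eq:loo-main}.
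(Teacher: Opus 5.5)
Your argument is correct in structure. It absorbs the $\sqrt{\deg_U(v)}$ term differently from the paper.

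\textbf{The paper's route.} The paper linearizes once, $60\sqrt{dk}\le 3601d+\frac{900}{3601}k$. It then uses \eqref{eq:fractional-edge} with weights $(\rhoF(v)+3601d)^{-1/2}$ to show $\sum_{v\in U}\deg_U(v)/\sqrt{\rhoF(v)+3601d}<4\sum_{v\in U}\sqrt{\rhoF(v)+3601d}$. Cauchy--Schwarz converts this into $\sum_{v\in U}\sqrt{\deg_U(v)}<2\sum_{v\in U}\sqrt{\rhoF(v)+3601d}$. Only then does it return to the unlinearized bound $f_v\le\rhoF(v)+60\sqrt{d\deg_U(v)}$, so the linearization never touches the coefficient of $\rhoF$.

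\textbf{Your route.} You add and subtract the multiplier term $\sum_v\alpha_v(k_v/2-f_v)$, which is nearly zero by the same edge identity. Each edge contributes exactly $(\alpha_v-\alpha_{v'})(\tfrac12-w_{v,p})$, so your bound $|\alpha_v-\alpha_{v'}|/2$ is right. You then maximize over $k_v$ vertex by vertex. The coefficient one survives because $\alpha_v\rhoF(v)$ is charged to the $\sqrt{d\rhoF(v)}$ term.

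\textbf{Comparison.} The ingredients are the same: weights proportional to $(\rhoF+Cd)^{-1/2}$, whose relative variation along cube edges is $O(1/(Cd))$ by \eqref{eq:rho-lipschitz}, together with \eqref{eq:fractional-edge} and \eqref{eq:vc-width}. Your pointwise version avoids Cauchy--Schwarz and, done carefully, improves $120$ to about $2\sqrt{1800}\approx 85$. The paper's version needs only one fixed linearization and no per-vertex choice of multiplier.

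\textbf{Constants.} As written, your constant tracking does not deliver $120$ and $7202$, and the additive constant has essentially no slack.
\begin{itemize}
\item If you bound $(1+\alpha)^2\le 4$, the best you get from $\alpha\rho+7200d/\alpha$ is $2\sqrt{7200}\approx170$ on $\sqrt{d\rho}$. That exceeds $120\sqrt{d\rho}+7201d$ once $\rho\gtrsim 2.1\cdot10^4 d$. Also, at $\alpha=\sqrt{d/\rho}$ the two terms are $\sqrt{d\rho}$ and $7200\sqrt{d\rho}$, not $60\sqrt{d\rho}$ each.
\item Folding the error in replaces $\tfrac12$ by $\gamma=\tfrac12-\tfrac\eta4$, where $\eta=\max|\alpha_v-\alpha_{v'}|/\min(\alpha_v,\alpha_{v'})$ over cube edges. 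At $\rhoF(v)=0$ the vertex bound $900(1+\alpha)^2d/(\gamma\alpha)$ is at least $3600d/\gamma>7200d$ for every $\alpha$. Against capacity $7202d$, your bookkeeping therefore needs $\eta\le 2/3601$.
\item Your literal choice $\min\{1,\sqrt{d/(\rhoF+d)}\}$ has $\eta\approx0.41$ when $d=1$. That pushes the additive constant to roughly $9000d$.
\end{itemize}

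\textbf{A choice that works.} Take $\alpha_v=\sqrt{Ad/(\rhoF(v)+Ad)}$ with $A=1801$. Then $\eta\le 1/(2Ad)$. Keeping $(1+\alpha)^2$, the bracket is at most $\rhoF(v)+\alpha_v\rhoF(v)+900(1+\alpha_v)^2d/(\gamma\alpha_v)$. This is less than $\rhoF(v)+85\sqrt{d\rhoF(v)}+7201d$, which lies below your floor capacity since $\lfloor x+7202d\rfloor>x+7201d$.

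\textbf{The fallback.} Drop the dyadic-shell fallback. A piecewise constant $\alpha$ jumps by a constant factor across shell boundaries, so that jump is not small relative to $\alpha$, contrary to your justification.
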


\begin{proof}
Fix a nonempty $U\subseteq V_P$.  By \eqref{eq:fractional-edge}, we have
\[
 |E_U|=\sum_{v\in U}\sum_{p\in D_U(v)}w_{v,p}.
\]
For every $v\in U$, \eqref{eq:edge-rad} and \eqref{eq:vc-width} give
\[
 \sum_{p\in D_U(v)}w_{v,p}
 \le \rhoF(v)+60\sqrt{d\deg_U(v)}
 \le \rhoF(v)+3601d+\frac{900}{3601}\deg_U(v).
\]
The last step uses the elementary inequality
$\sqrt{ab}\le(a+b)/2$. Relabel the endpoints of an edge $\{v,v^{\oplus p}\}\in E_U$ so that
$\rhoF(v)\le\rhoF(v^{\oplus p})$.
By \eqref{eq:rho-lipschitz} and
$\sqrt{1+x}\le1+x/2$ for $x\ge0$,
\[
 \frac{1/\sqrt{\rhoF(v)+3601d}}
      {1/\sqrt{\rhoF(v^{\oplus p})+3601d}}
 \le\sqrt{1+\frac{1}{3601d}}
 \le1+\frac{1}{7202d}.
\]
Consequently,
\[
 \left|
 \frac{1}{\sqrt{\rhoF(v)+3601d}}
 -\frac{1}{\sqrt{\rhoF(v^{\oplus p})+3601d}}
 \right|
 \le\frac{1}{7202d}\min\left(
 \frac{1}{\sqrt{\rhoF(v)+3601d}},
 \frac{1}{\sqrt{\rhoF(v^{\oplus p})+3601d}}
 \right).
\]
By \eqref{eq:fractional-edge}, it holds that
\begin{align*}
 &\frac{1}{\sqrt{\rhoF(v)+3601d}}
 +\frac{1}{\sqrt{\rhoF(v^{\oplus p})+3601d}}\\
 &=2\left(
 \frac{w_{v,p}}{\sqrt{\rhoF(v)+3601d}}
 +\frac{w_{v^{\oplus p},p}}
 {\sqrt{\rhoF(v^{\oplus p})+3601d}}
 \right)\\
 &\quad +(w_{v^{\oplus p},p}-w_{v,p})
 \left(
 \frac{1}{\sqrt{\rhoF(v)+3601d}}
 -\frac{1}{\sqrt{\rhoF(v^{\oplus p})+3601d}}
 \right)\\
 &\le\left(2+\frac{1}{7202d}\right)
 \left(
 \frac{w_{v,p}}{\sqrt{\rhoF(v)+3601d}}
 +\frac{w_{v^{\oplus p},p}}
 {\sqrt{\rhoF(v^{\oplus p})+3601d}}
 \right).
\end{align*}
The last step uses $|w_{v^{\oplus p},p}-w_{v,p}|\le1$ and the fact that
the weighted average is at least the smaller reciprocal.
Summing this inequality over $E_U$ and applying the preceding bound
gives
\begin{align*}
 \sum_{v\in U}\frac{\deg_U(v)}{\sqrt{\rhoF(v)+3601d}}
 &\le\left(2+\frac{1}{7202d}\right)
 \sum_{v\in U}
 \frac{\sum_{p\in D_U(v)}w_{v,p}}
 {\sqrt{\rhoF(v)+3601d}}\\
 &\le\left(2+\frac{1}{7202d}\right)
 \sum_{v\in U}\left(
 \sqrt{\rhoF(v)+3601d}
 +\frac{900}{3601}
 \frac{\deg_U(v)}{\sqrt{\rhoF(v)+3601d}}
 \right).
\end{align*}
Moving the last term to the left gives
\[
 \left(1-\frac{900}{3601}
       \left(2+\frac{1}{7202d}\right)\right)
 \sum_{v\in U}\frac{\deg_U(v)}{\sqrt{\rhoF(v)+3601d}}\le
 \left(2+\frac{1}{7202d}\right)
 \sum_{v\in U}\sqrt{\rhoF(v)+3601d}.
\]
For $d\ge1$,
$
 \frac{2+1/(7202d)}
 {1-(900/3601)(2+1/(7202d))}<4.
$
Dividing the preceding inequality therefore gives
\[
 \sum_{v\in U}\frac{\deg_U(v)}{\sqrt{\rhoF(v)+3601d}}
 <4\sum_{v\in U}\sqrt{\rhoF(v)+3601d}.
\]
Cauchy-Schwarz inequality now yields
\[
 \sum_{v\in U}\sqrt{\deg_U(v)}
 \le
 \left(\sum_{v\in U}
 \frac{\deg_U(v)}{\sqrt{\rhoF(v)+3601d}}\right)^{1/2}
 \left(\sum_{v\in U}\sqrt{\rhoF(v)+3601d}\right)^{1/2}<2\sum_{v\in U}\sqrt{\rhoF(v)+3601d}.
\]
Finally, we have
\[
 |E_U|
 \le\sum_{v\in U}
 \left(\rhoF(v)+60\sqrt{d\deg_U(v)}\right)
 <\sum_{v\in U}
 \left(\rhoF(v)+120\sqrt{d\left(\rhoF(v)+3601d\right)}\right).
\]
The empty set satisfies the same bound trivially.  Hence Hall's theorem,
with integer capacity
$\lceil\rhoF(v)+120\sqrt{d(\rhoF(v)+3601d)}\rceil$ at $v$, gives an orientation
such that, using $\lceil x\rceil<x+1$ and
$\sqrt{x+y}\le\sqrt{x}+\sqrt{y}$,
\[
 \operatorname{out}(v;\sigma)
<\rhoF(v)+120\sqrt{d\rhoF(v)}+120\sqrt{3601}\,d+1
 <\rhoF(v)+120\sqrt{d\rhoF(v)}+7202d.
\]
Choose the first feasible orientation in a fixed ordering of the finitely
many orientations.  This proves \eqref{eq:orientation-main}.

For the leave-one-out statement, apply the orientation result with
$P=\{1,\ldots,n\}$ to the trace
\[
 \F=\bigl\{(h(x_1),\ldots,h(x_n)):h\in\Hc\bigr\}.
\]
Write $v_y=(y_1,\ldots,y_n)\in V_P$. This edge prediction convention is due to Long \cite{Long1999}; see also the
general agnostic one-inclusion formalism of Asilis et
al.~\cite{AsilisEtAl2024}.

\begin{algorithm}[H]
\caption{The agnostic leave-one-out rule.}
\label{alg:loo}
\textbf{Inputs:} The orientation $\sigma$, a held-out index
$i\in\{1,\ldots,n\}$, and the observed labels $(y_j)_{j\ne i}$.\\
\textbf{Output:} A prediction $\widehat y_i\in\{-1,+1\}$.
\vspace{0.5em}

For the held-out coordinate $i$, predict as follows.
\begin{algorithmic}[1]
 \STATE Fix coordinate $j$ at $y_j$ for every $j\ne i$.
 \STATE Complete coordinate $i$ with each of its two signs; the two
 completions form a cube edge.
 \STATE Set $\widehat y_i$ equal to coordinate $i$ of the head selected by
 $\sigma$.
\end{algorithmic}
\end{algorithm}

By construction, the predictor is wrong at coordinate $i$ exactly when the true
completion is the tail of this edge.  Hence
\[
 n\widehat L_S^{\mathrm{LOO}}=\operatorname{out}(v_y;\sigma).
\]
Since $\rhoF(v_y)=n\widehat L_S^*$ and $\VC(\F)\le d$, dividing
\eqref{eq:orientation-main} by $n$ proves \eqref{eq:loo-main}.
\end{proof}

\section{From leave-one-out to PAC bounds}\label{sec:loo-to-pac}

Let $P$ be a distribution on $\mathcal X\times\{-1,+1\}$.  We identify a
randomized binary predictor with its measurable conditional mean
$g:\mathcal X\to[-1,1]$: on input $x$, it predicts $+1$ with probability
$(1+g(x))/2$.  We write $\ell(a,y)=|a-y|/2$ and extend the notation $L$ to
the risk of such a score:
\[
 L(g)=\E_{(X,Y)\sim P}\ell(g(X),Y)
 =\frac12\E_{(X,Y)\sim P}|g(X)-Y|.
\]
For $\{-1,+1\}$-valued $g$, this agrees with the usual binary
classification risk.
If $g$ depends on the data, the expectation is over an independent test
point conditional on the training sample.

\subsection{Symmetrization of the agnostic one-inclusion-graph algorithm}
The orientation from \Cref{thm:loo} need not be invariant under permutations
of the sample, so we symmetrize its predictions.  Given a labeled sample $S$ and
an unlabeled query $x$, append $x$, uniformly permute the resulting indexed
points, and apply the corresponding deterministic leave-one-out rule while hiding the
coordinate occupied by $x$.  Let
$q(x;S)$ be the expectation of the resulting signed prediction over this
permutation.  The score $q(x;S)\in[-1,1]$ is deterministic and symmetric in
the observations of $S$.

For a labeled sample $S=((x_i,y_i))_{i=1}^m$, write $S^{-i}$ for the
sample with observation $i$ removed.  Averaging
\eqref{eq:loo-main} over all permutations gives, for every $S$,
\begin{equation}\label{eq:symmetric-loo}
 \frac1{2m}\sum_{i=1}^m |q(x_i;S^{-i})-y_i|
 \le \widehat L_S^*+120\sqrt{\frac{d\widehat L_S^*}{m}}
       +7202\frac dm.
\end{equation}

\subsection{A randomized learner by suffix averaging}

The argument below adapts the reverse and forward martingale proof of
Aden-Ali, Cherapanamjeri, Shetty, and Zhivotovskiy
\cite{AdenAliEtAl2023Optimal} from the realizable to the agnostic
setting.  Dughmi, Kalayci, and York \cite{DughmiKalayciYork2025} gave a
related conversion for bounded agnostic losses.  Here we retain the
comparator dependent variance needed for a bound uniform in $L^*$ and avoid
the additional logarithmic factor in their analysis.  Even in the realizable
case, some one-inclusion rules that are optimal in expectation have tails no
better than Markov's inequality \cite{AdenAliEtAl2023Negative}.  Thus an
orientation result of \Cref{thm:loo} alone does not give the optimal
dependence on $\delta$.  Suffix averaging gives the required confidence
dependence.

Fix an integer $k\ge1$.  Let $Z_i=(X_i,Y_i)$, $i=1,\ldots,2k$, be
i.i.d.\ observations and write
$S_{\le t}=(Z_1,\ldots,Z_t)$.  For $k\le t\le2k-1$, set
$q_t(x)=q(x;S_{\le t})$, where $q$ is the symmetrized leave-one-out score
satisfying \eqref{eq:symmetric-loo}.
The randomized predictor averages the scores $q_t$ over the suffix of
training times $t=k,\ldots,2k-1$.  Its conditional mean is
\begin{equation}\label{eq:suffix-average}
 \widehat p(x)=\frac1k\sum_{t=k}^{2k-1}q_t(x).
\end{equation}
Thus it predicts $+1$ with probability $(1+\widehat p(x))/2$ and, by
linearity,
$L(\widehat p)=k^{-1}\sum_{t=k}^{2k-1}L(q_t)$.  Only
$Z_1,\ldots,Z_{2k-1}$ enter the construction of $\widehat p$.  The additional
observation $Z_{2k}$ is the fresh point used to analyze $q_{2k-1}$ and does
not enter the learner.  Derandomization is deferred to the next subsection.

For every $h\in\Hc$, linearity gives
\begin{equation}\label{eq:risk-target}
 L(\widehat p)-L(h)
 =\frac1k\sum_{t=k}^{2k-1}\bigl(L(q_t)-L(h)\bigr).
\end{equation}

As in \cite{AdenAliEtAl2023Optimal}, the reverse martingale lemma below uses
permutation invariance to apply \eqref{eq:symmetric-loo} along a random
deletion sequence and control the realized excess loss on the held-out
observations.  The forward martingale lemma compares this sequential excess
loss with its conditional mean, the population excess risk in
\eqref{eq:risk-target}.  Together they yield the randomized guarantee in
\Cref{thm:suffix-averaging}.

\begin{lemma}[Reverse martingale]\label{lem:reverse}
Let $Z_i=(X_i,Y_i)$, $1\le i\le2k$, be i.i.d.\ observations, and let
$q_t(x)=q(x;(Z_1,\ldots,Z_t))$, where $q$ is the deterministic symmetric
score satisfying \eqref{eq:symmetric-loo}.  Fix $h\in\Hc$ and, for
$k\le t\le2k-1$, let
\[
 \Delta_{t+1,h}
 =\ell(q_t(X_{t+1}),Y_{t+1})
  -\one\{h(X_{t+1})\ne Y_{t+1}\}.
\]
For every $0<\delta\le1/3$, with probability at least $1-2\delta$,
\begin{equation}\label{eq:reverse-bound}
 \frac1k
 \max\left(
 \sum_{m=k+1}^{2k}\Delta_{m,h},0
 \right)
 \le
 8200\left(
 \sqrt{\frac{L(h)(d+\log(1/\delta))}{k}}
 +\frac{d+\log(1/\delta)}{k}
 \right).
\end{equation}
\end{lemma}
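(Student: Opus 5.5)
Because $q$ is symmetric, we can view the held-out points in reverse time. Let $\mathcal G_m$ be the $\sigma$-algebra generated by the multiset $\{Z_1,\dots,Z_m\}$ together with $Z_{m+1},\dots,Z_{2k}$. Conditionally on $\mathcal G_m$, the point $Z_m$ is a uniformly random element of the multiset $S_{\le m}$, and $q_{m-1}=q(\cdot;S_{\le m}^{-m})$ is the symmetric score trained on the remaining $m-1$ points. Hence
\[
\E\bigl[\Delta_{m,h}\mid\mathcal G_m\bigr]=\frac1m\sum_{i=1}^m\Bigl(\ell\bigl(q(X_i';S_{\le m}^{-i}),Y_i'\bigr)-\one\{h(X_i')\ne Y_i'\}\Bigr),
\]
where $Z_i'$ lists the multiset. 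By \eqref{eq:symmetric-loo} this is at most
\[
\widehat L^*_{S_{\le m}}-\widehat L_{S_{\le m}}(h)+120\sqrt{d\widehat L^*_{S_{\le m}}/m}+7202\,d/m\le 120\sqrt{d\widehat L_{S_{\le m}}(h)/m}+7202\,d/m .
\]
The crucial point is that the coefficient on $\widehat L^*$ is exactly one, so it cancels against $\widehat L_{S_{\le m}}(h)\ge\widehat L^*$. Also $\mathcal G_{m}\supseteq\mathcal G_{m+1}$ moving backwards, so $D_m=\Delta_{m,h}-\E[\Delta_{m,h}\mid\mathcal G_m]$, for $m=2k,2k-1,\dots,k+1$, is a martingale difference sequence with respect to the decreasing filtration.

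The sum therefore splits into a drift and a martingale part. For the drift, sum the conditional bounds over $m=k+1,\dots,2k$, using $m\ge k$:
\[
\sum_m\E[\Delta_{m,h}\mid\mathcal G_m]\le 120\sqrt{dk}\,\sqrt{\tfrac1k\textstyle\sum_m\widehat L_{S_{\le m}}(h)}+7202\,d\log 2\cdot O(1).
\]
Here the first step uses Cauchy--Schwarz, and the $d/m$ terms sum to $O(d)$. Since $m\,\widehat L_{S_{\le m}}(h)\le 2k\,\widehat L_{S_{\le 2k}}(h)$, it suffices to bound $\widehat L_{S_{\le 2k}}(h)$. By the multiplicative Bernstein/Chernoff bound for the binomial count $\sum_{i\le 2k}\one\{h(X_i)\ne Y_i\}$, with probability $1-\delta$ we have $\widehat L_{S_{\le 2k}}(h)\le 2L(h)+O(\log(1/\delta)/k)$. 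This makes the drift $O(\sqrt{dkL(h)}+\sqrt{d\log(1/\delta)}+d)$, which is within the target after dividing by $k$ and applying AM--GM.

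For the martingale part, the increments satisfy $|D_m|\le2$. For the conditional variance, note $\Delta_{m,h}^2\le \ell(q_{m-1}(X_m),Y_m)+\one\{h(X_m)\ne Y_m\}$, so
\[
\E[D_m^2\mid\mathcal G_m]\le \E[\ell\mid\mathcal G_m]+\widehat L_{S_{\le m}}(h)\le 2\widehat L_{S_{\le m}}(h)+\E[\Delta_{m,h}\mid\mathcal G_m].
\]
The total variance is thus $V\lesssim k\widehat L_{S_{\le 2k}}(h)+\text{drift}$. Freedman's inequality with a peeling over $V$, or the self-normalized form $\sum D_m\le \lambda V+O(\log(1/\delta))/\lambda$ optimized over a grid, gives with probability $1-\delta$
\[
\sum_m D_m\lesssim\sqrt{V\log(1/\delta)}+\log(1/\delta).
\]
Substituting the binomial bound and the drift bound, and absorbing cross terms with $\sqrt{ab}\le a+b$, yields $O(\sqrt{kL(h)(d+\log(1/\delta))}+d+\log(1/\delta))$. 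The two events give probability $1-2\delta$, and taking $\max(\cdot,0)$ is harmless. Tracking constants (120, 7202, Freedman's constants) should land under 8200.

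The main obstacle I expect is making the reverse-martingale conditioning rigorous. One has to verify that, given $\mathcal G_m$, the deleted point is uniform over the multiset even with repeated points, and that $\mathcal G_m$ is genuinely decreasing in $m$. The variance step is the other delicate spot: it must be self-bounding so that it avoids a $\log$ factor. I would handle it with a single Freedman application at a fixed $\lambda$ chosen via a union over $O(\log k)$ scales. Alternatively, I would use the exponential supermartingale $\exp(\lambda\sum D_m-(e^{2\lambda}-1-2\lambda)/4\cdot V)$ with $\lambda=1/4$, applied to $\sum\Delta$ directly: there the drift bound already absorbs $V$, so no peeling is needed.
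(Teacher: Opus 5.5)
Your architecture matches the paper's. You use a reverse filtration under which the deleted point is uniform among the remaining ones; the paper builds this with an explicit independent permutation $Z_m=W_{\pi(m)}$ and conditions on $W_1,\dots,W_{2k}$, which avoids conditioning on a multiset. You also have the coefficient-one cancellation $\widehat L^*_{S_{\le m}}\le\widehat L_{S_{\le m}}(h)$ in the conditional mean, the variance bound via $(a-b)^2\le a+b$, a Bernstein bound for $N=\sum_{i\le 2k}\one\{h(X_i)\ne Y_i\}$, and Freedman's inequality.

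The gap is in how you apply Freedman. A union over $O(\log k)$ variance scales, by peeling or by a grid of $\lambda$, replaces $\log(1/\delta)$ by $\log(C\log k/\delta)$. Since $d$ and $\delta$ may stay fixed while $k\to\infty$, the extra $\log\log k$ cannot be absorbed into $8200\bigl(\sqrt{L(h)(d+\log(1/\delta))/k}+(d+\log(1/\delta))/k\bigr)$. So your claim that peeling gives $\sum_m D_m\lesssim\sqrt{V\log(1/\delta)}+\log(1/\delta)$ with probability $1-\delta$ is not justified. Your fallback with a fixed $\lambda=1/4$ fails for a different reason. The variance proxy contains $2\sum_m\widehat L_{S_{\le m}}(h)$, which is of order $kL(h)$, and the drift $O(\sqrt{dkL(h)}+d)$ does not absorb it. A fixed $\lambda$ therefore yields only $\frac1k\sum_m\Delta_{m,h}\lesssim L(h)+\cdots$, not the $\sqrt{L(h)\log(1/\delta)/k}$ term.

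The missing observation is that no self-normalization is needed. Your own estimates bound the predictable quadratic variation by $2N+120\sqrt{dN}+7202d$ and the drift by $120\sqrt{dN}+7202d$. Both are functions of the full-sample multiset, hence measurable with respect to $\mathcal G_{2k}$, the initial $\sigma$-algebra of the reverse filtration. Conditionally on $\mathcal G_{2k}$ (in the paper, on $W_1,\dots,W_{2k}$), you can apply Freedman's inequality once with a fixed variance proxy, or equivalently a single $\lambda$ chosen as a function of $N$. Then remove the conditioning and intersect with the Bernstein event for $N$. This gives probability $1-2\delta$ with no $\log k$ loss, and the rest of your computation goes through.

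Alternatively, on the Bernstein event your variance bound is at most a deterministic $\sigma^2\lesssim kL(h)+\log(1/\delta)+\sqrt{dkL(h)}+d$. The standard form $\Prob\bigl(\sum_m D_m\ge t,\ V\le\sigma^2\bigr)\le\exp\bigl(-t^2/(2\sigma^2+4t/3)\bigr)$, for increments bounded by $2$, then already suffices.
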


\begin{lemma}[Forward martingale]\label{lem:forward}
Under the assumptions and notation of \Cref{lem:reverse}, for every
$0<\delta\le1/3$, with probability at least $1-\delta$,
\begin{equation}\label{eq:forward-bound}
 \frac1k\sum_{t=k}^{2k-1}\bigl(L(q_t)-L(h)\bigr)
 \le
 \frac2k
 \max\left(
 \sum_{m=k+1}^{2k}\Delta_{m,h},0
 \right)
 +5\sqrt{\frac{L(h)\log(1/\delta)}{k}}
 +10\frac{\log(1/\delta)}k.
\end{equation}
\end{lemma}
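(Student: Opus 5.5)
Throughout, the increments are centered at their conditional means. Let $\mathcal G_t=\sigma(Z_1,\ldots,Z_t)$. Since $q_t$ is $\mathcal G_t$-measurable and $Z_{t+1}$ is independent of $\mathcal G_t$,
\[
 \E[\Delta_{t+1,h}\mid\mathcal G_t]=L(q_t)-L(h)=:\mu_t .
\]
So $M_t=\sum_{s=k}^{t}(\mu_s-\Delta_{s+1,h})$ is a forward martingale with increments bounded by $2$ in absolute value. The target inequality reads
\[
 \sum_t\mu_t\le 2\max\Bigl(\sum_t\Delta_{t+1,h},0\Bigr)+\text{error},
\]
so it suffices to bound $\sum_t(\mu_t-\Delta_{t+1,h})$ from above by a Freedman-type bound whose variance term is absorbed by $\sum_t\mu_t$ and $L(h)$.

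\emph{Variance bound.} The conditional second moment satisfies
\[
 \E[(\Delta_{t+1,h})^2\mid\mathcal G_t]\le \E[(\ell(q_t(X),Y)+\one\{h(X)\ne Y\})^2]\le 2\bigl(\E\ell(q_t)^2+L(h)\bigr).
\]
Since $\ell\in[0,1]$ we have $\ell^2\le\ell$, so the second moment is at most $2(L(q_t)+L(h))=2(\mu_t+2L(h))$. Here we must be careful: $\mu_t$ can be negative, but $\mu_t+2L(h)\ge L(q_t)+L(h)\ge0$ always. Thus the predictable quadratic variation is
\[
 V\le 2\sum_t\mu_t+4kL(h).
\]

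\emph{Freedman and solving.} Apply the exponential supermartingale inequality $\E\exp(\lambda M-\lambda^2 V\cdot c)\le1$ for a fixed $\lambda\in(0,1/2]$ (valid since increments are bounded by $2$, with $c\approx1$ via $e^x\le1+x+x^2$ for $x\le1$). This gives, with probability $1-\delta$,
\[
 \sum_t\mu_t-\sum_t\Delta_{t+1,h}\le \lambda\Bigl(2\sum_t\mu_t+4kL(h)\Bigr)+\frac{\log(1/\delta)}{\lambda}.
\]
The obstacle is that the optimal $\lambda\sim\sqrt{\log(1/\delta)/(kL(h))}$ depends on the unknown $L(h)$. However, $h$ is fixed and $L(h)$ is a deterministic quantity, not data dependent, so choosing $\lambda=\min\bigl(1/4,\sqrt{\log(1/\delta)/(4kL(h))}\bigr)$ is legitimate and no union bound is needed. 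With $\lambda\le1/4$, move $2\lambda\sum_t\mu_t\le\frac12\sum_t\mu_t$ to the left. This is valid whether $\sum_t\mu_t$ is positive or negative: if the sum is negative the claim is trivial, and otherwise it gives
\[
 \tfrac12\sum_t\mu_t\le \max\Bigl(\sum_t\Delta_{t+1,h},0\Bigr)+4\lambda kL(h)+\frac{\log(1/\delta)}{\lambda}.
\]
Multiplying by $2/k$ gives the factor $2$ on the max term. In the case $\lambda=\sqrt{\log(1/\delta)/(4kL(h))}$, the remaining terms are of order $\sqrt{L(h)\log(1/\delta)/k}$. In the case $\lambda=1/4$, which happens when $kL(h)\le4\log(1/\delta)$, they are of order $\log(1/\delta)/k$. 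Tracking constants should yield roughly $4\sqrt{L(h)\log(1/\delta)/k}$ plus $8\log(1/\delta)/k$, which fits inside the stated $5$ and $10$; the constants are to be checked.

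The main point needing care is the sign issue in the variance bound, namely that $\mu_t+2L(h)\ge0$ so that the variance is legitimately bounded by a linear function of $\sum_t\mu_t$. The remaining steps are routine bookkeeping to verify the numerical constants $5$ and $10$.
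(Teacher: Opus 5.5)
Your argument follows the paper's proof step by step. It uses the conditional mean $\mu_t=L(q_t)-L(h)$, a second-moment bound written as a multiple of $\mu_t+2L(h)\ge0$, and an exponential supermartingale with a deterministic $\lambda$ depending on the fixed $L(h)$. It then absorbs $\sum_t\mu_t$ with coefficient at most $1/2$ and splits into cases on $\lambda$.

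The gap is in the constants, which the lemma states explicitly. Your own bookkeeping does not reach $5$ and $10$. Multiply your display $\tfrac12\sum_t\mu_t\le\max(\sum_t\Delta_{t+1,h},0)+4\lambda kL(h)+\log(1/\delta)/\lambda$ by $2/k$. This leaves the error $8\lambda L(h)+2\log(1/\delta)/(\lambda k)$. With $\lambda=\sqrt{\log(1/\delta)/(4kL(h))}$, each of these two terms equals $4\sqrt{L(h)\log(1/\delta)/k}$, so the total is $8\sqrt{L(h)\log(1/\delta)/k}$, not $4$. This exceeds $5\sqrt{L(h)\log(1/\delta)/k}+10\log(1/\delta)/k$ whenever $kL(h)>\frac{100}{9}\log(1/\delta)$. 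As written, the proposal therefore proves \eqref{eq:forward-bound} only with a larger constant on the square-root term. Your case $\lambda=1/4$ is fine: $L(h)\le4\log(1/\delta)/k$ gives $2L(h)+8\log(1/\delta)/k\le4\sqrt{L(h)\log(1/\delta)/k}+8\log(1/\delta)/k$.

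The loss comes from two crude steps, and the paper avoids both.
\begin{itemize}
\item The inequality $(a-b)^2\le|a-b|\le a+b$ for $a,b\in[0,1]$ gives $\E[\Delta_{t+1,h}^2\mid\mathcal G_t]\le L(q_t)+L(h)$, without your factor $2$.
\item Bernstein's moment generating bound for increments bounded above by $2$ replaces your $\lambda^2$ by $\lambda^2/(2(1-2\lambda/3))\le3\lambda^2/4$ for $\lambda\le1/2$.
\end{itemize}
The absorbed coefficient is then at most $3/8$, and the error becomes $3\lambda L(h)+2\log(1/\delta)/(\lambda k)$. The choice $\lambda=\min(1/2,\sqrt{\log(1/\delta)/(kL(h))})$ then gives exactly $5$ and $10$.

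Alternatively, your cruder inequalities and your $\lambda$ also suffice. Keep $(1-2\lambda)^{-1}$ in the error term, and bound it by $2$ only in front of the maximum. With $a=\log(1/\delta)/k$ and $s=\sqrt{L(h)/a}\ge2$, the error is $4as^2/(s-1)$. The required inequality $4s^2\le(5s+10)(s-1)$ is equivalent to $s^2+5s-10\ge0$, which holds for $s\ge2$.
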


\begin{theorem}[The optimal risk bound for the randomized predictor]\label{thm:suffix-averaging}
The predictor $\widehat p:\mathcal X\to[-1,1]$ in
\eqref{eq:suffix-average} is constructed from $2k-1$ i.i.d.\ observations and
depends on neither $L^*$ nor $\delta$.  For every $0<\delta<1$, with
probability at least $1-\delta$,
\begin{equation}\label{eq:suffix-averaging-bound}
 L(\widehat p)
 \le L^*
 +17000\left(
 \sqrt{\frac{L^*(d+\log(3/\delta))}{k}}
 +\frac{d+\log(3/\delta)}{k}
 \right).
\end{equation}
\end{theorem}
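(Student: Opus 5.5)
The plan is to combine \Cref{lem:reverse} and \Cref{lem:forward}, both applied to a single fixed comparator, through a union bound. Let $h^*\in\Hc$ be a minimizer with $L(h^*)=L^*$; the footnote in the introduction guarantees that the minimum is attained. Since $h^*$ is deterministic and does not depend on the sample, both lemmas apply with $h=h^*$. The construction itself is fixed: $\widehat p$ in \eqref{eq:suffix-average} uses only $Z_1,\dots,Z_{2k-1}$ and the score $q$, so it involves neither $L^*$ nor $\delta$. Only the analysis depends on $\delta$.

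Fix $0<\delta<1$ and set $\delta'=\delta/3\le1/3$, so that both lemmas are applicable. \Cref{lem:reverse} at level $\delta'$ fails with probability at most $2\delta'$, and \Cref{lem:forward} at level $\delta'$ fails with probability at most $\delta'$. Hence both bounds hold simultaneously with probability at least $1-\delta$. On this event, \eqref{eq:risk-target} with $h=h^*$ expresses $L(\widehat p)-L^*$ as the left side of \eqref{eq:forward-bound}. Substituting \eqref{eq:reverse-bound} for the term $\frac{2}{k}\max(\sum\Delta_{m,h^*},0)$ gives
\[
L(\widehat p)-L^*\le 16400\left(\sqrt{\frac{L^*(d+\log(3/\delta))}{k}}+\frac{d+\log(3/\delta)}{k}\right)+5\sqrt{\frac{L^*\log(3/\delta)}{k}}+10\frac{\log(3/\delta)}{k}.
\]

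It remains to collect constants. Since $\log(3/\delta)\le d+\log(3/\delta)$, the last two terms are at most $5$ and $10$ times the respective bracketed terms. This yields a total constant of $16410\le17000$ on each term, which proves \eqref{eq:suffix-averaging-bound}. Because the bound is stated with $\log(3/\delta)$, the case $\delta$ close to $1$ needs no separate treatment.

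The only delicate points are bookkeeping. First, both lemmas must be invoked with the same comparator $h^*$ and on the same sample $Z_1,\dots,Z_{2k}$; this is harmless even though $Z_{2k}$ is not used by the learner. Second, the level $\delta/3$ is needed so that the hypothesis $\delta'\le1/3$ of both lemmas holds for every $\delta<1$. All the real difficulty lies in the two lemmas, which we take as given.
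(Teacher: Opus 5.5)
Your proposal is correct and follows the paper's proof essentially verbatim. The paper likewise applies \Cref{lem:reverse,lem:forward} to the fixed minimizer $h^*$ at level $\delta/3$, union-bounds the failure probabilities $2\delta/3+\delta/3$, and absorbs $2\cdot 8200=16400$ together with the forward-martingale terms into the constant $17000$.
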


\begin{proof}[Proof of \Cref{thm:suffix-averaging}]
Choose $h^*\in\Hc$ with $L(h^*)=L^*$.  Apply
\Cref{lem:reverse,lem:forward} with confidence $\delta/3$.  A union bound
shows that both conclusions hold simultaneously with probability at least
$1-\delta$.  On this event,
\begin{align*}
 L(\widehat p)-L^*
 &=\frac1k
 \sum_{t=k}^{2k-1}\bigl(L(q_t)-L^*\bigr)
 \\
 &\le
 \frac2k
 \max\left(
 \sum_{m=k+1}^{2k}\Delta_{m,h^*},0
 \right)
 +5\sqrt{\frac{L^*\log(3/\delta)}{k}}
 +10\frac{\log(3/\delta)}k
 \\
 &\le
 16400\left(
 \sqrt{\frac{L^*(d+\log(3/\delta))}{k}}
 +\frac{d+\log(3/\delta)}{k}
 \right)
 +5\sqrt{\frac{L^*\log(3/\delta)}{k}}
 +10\frac{\log(3/\delta)}k
 \\
 &<
 17000\left(
 \sqrt{\frac{L^*(d+\log(3/\delta))}{k}}
 +\frac{d+\log(3/\delta)}{k}
 \right).
\end{align*}
The claim follows.
\end{proof}

\begin{proof}[Proof of \Cref{lem:reverse}]
Fix $h\in\Hc$ and $0<\delta\le1/3$.  The
independent permutation below exploits the symmetry of $q$.  Conditional on
the observations, exposing the permutation backward makes the next removed
observation uniform among those that remain.  Draw i.i.d.\ observations
$W_i=(\widetilde X_i,\widetilde Y_i)$, $1\le i\le2k$, and an independent
uniform permutation $\pi$ of $\{1,\ldots,2k\}$.  Set
$Z_m=W_{\pi(m)}$.

For $k\le m\le2k$, let $\mathcal R_m$ be the $\sigma$-algebra generated by
the complete sample and the permutation indices already exposed at positions
$2k,\ldots,m+1$:
\[
 \mathcal R_m
 =\sigma\bigl(W_1,\ldots,W_{2k},\pi(j):m<j\le2k\bigr).
\]
Thus $\mathcal R_m$ reveals which $m$ indices remain, but not their order.
Let $I_m$ be the set of these indices and write
$B_m=(W_i)_{i\in I_m}$ in any fixed order.
For $i\in I_m$, let $B_m^{-i}$ be obtained from $B_m$ by deleting $W_i$.
Conditional on $\mathcal R_m$, the index $\pi(m)$ is uniform on $I_m$.
Because $q$ is symmetric, if $\pi(m)=i$, then the first $m-1$ observations
give the same prediction as $q(\,\cdot\,;B_m^{-i})$.

For $i\in I_m$, let
\[
 \Delta_{i,m,h}
 =\ell\bigl(q(\widetilde X_i;B_m^{-i}),\widetilde Y_i\bigr)
  -\one\{h(\widetilde X_i)\ne\widetilde Y_i\}.
\]
It follows that $\Delta_{m,h}=\Delta_{\pi(m),m,h}$ and
\[
 \E[\Delta_{m,h}\mid\mathcal R_m]
 =\frac1m\sum_{i\in I_m}\Delta_{i,m,h}.
\]
After $\pi(m)$ is revealed, $\Delta_{m,h}$ is measurable with respect to
$\mathcal R_{m-1}$.  Hence, as $m$ runs from $2k$ down to $k+1$,
\[
 \Delta_{m,h}-\E[\Delta_{m,h}\mid\mathcal R_m]
\]
forms a martingale difference sequence with respect to the reverse
filtration.

The empirical optimum on $B_m$ is at most the empirical loss of $h$.
Applying \eqref{eq:symmetric-loo} to $B_m$ and subtracting the empirical loss
of $h$ gives
\begin{equation}
 \E[\Delta_{m,h}\mid\mathcal R_m]
 \le
 120\frac{\sqrt{d\sum_{i\in I_m}
 \one\{h(\widetilde X_i)\ne\widetilde Y_i\}}}{m}
 +7202\frac dm
 \le
 120\frac{\sqrt{d\sum_{i=1}^{2k}
 \one\{h(\widetilde X_i)\ne\widetilde Y_i\}}}{k}
 +7202\frac dk,
 \label{eq:reverse-mean-bound}
\end{equation}
where the last inequality uses $m\ge k$ and bounds the sum over $I_m$ by
the sum over the complete sample.

For the conditional second moment, applying $(a-b)^2\le a+b$, valid for
$a\in[0,1]$ and $b\in\{0,1\}$, to the two losses, averaging over the uniform
choice of $i\in I_m$, and using \eqref{eq:symmetric-loo} again gives
\begin{align}
 \E[\Delta_{m,h}^2\mid\mathcal R_m]
 &\le
 \frac2m\sum_{i\in I_m}
 \one\{h(\widetilde X_i)\ne\widetilde Y_i\}
 +120\frac{\sqrt{d\sum_{i\in I_m}
 \one\{h(\widetilde X_i)\ne\widetilde Y_i\}}}{m}
 +7202\frac dm
 \notag\\
 &\le
 \frac2k\sum_{i=1}^{2k}
 \one\{h(\widetilde X_i)\ne\widetilde Y_i\}
 +120\frac{\sqrt{d\sum_{i=1}^{2k}
 \one\{h(\widetilde X_i)\ne\widetilde Y_i\}}}{k}
 +7202\frac dk.
 \label{eq:reverse-second-moment}
\end{align}
The number of errors made by $h$ on the complete sample is distributed as
$\operatorname{Bin}(2k,L(h))$.  Bernstein's inequality and
$2\sqrt{kL(h)\log(1/\delta)}\le kL(h)+\log(1/\delta)$ imply that, with
probability at least $1-\delta$,
\begin{equation}\label{eq:binomial-bound}
 \frac1k\sum_{i=1}^{2k}
 \one\{h(\widetilde X_i)\ne\widetilde Y_i\}
 \le 3\left(L(h)+\frac{\log(1/\delta)}k\right).
\end{equation}
On the event in \eqref{eq:binomial-bound},
\begin{align}
 120\frac{\sqrt{d\sum_{i=1}^{2k}
 \one\{h(\widetilde X_i)\ne\widetilde Y_i\}}}{k}
 +7202\frac dk
 &\le
 120\sqrt{\frac{3d(L(h)+\log(1/\delta)/k)}{k}}
 +7202\frac dk
 \notag\\
 &\le
 8000\left(
 \sqrt{\frac{L(h)(d+\log(1/\delta))}{k}}
 +\frac{d+\log(1/\delta)}{k}
 \right).
 \label{eq:deterministic-remainder-bound}
\end{align}

Conditionally on $W_1,\ldots,W_{2k}$, we apply Freedman's inequality
\cite{Freedman1975} to the preceding martingale differences, with $m$ read
in decreasing order.  By \eqref{eq:reverse-second-moment}, their predictable
quadratic variation satisfies
\[
 \sum_{m=k+1}^{2k}
 \E\left[\left(\Delta_{m,h}
 -\E[\Delta_{m,h}\mid\mathcal R_m]\right)^2\middle|\mathcal R_m\right]
 \le 2\sum_{i=1}^{2k}
 \one\{h(\widetilde X_i)\ne\widetilde Y_i\}
 +120\sqrt{d\sum_{i=1}^{2k}
 \one\{h(\widetilde X_i)\ne\widetilde Y_i\}}
 +7202d,
\]
and
$|\Delta_{m,h}-\E[\Delta_{m,h}\mid\mathcal R_m]|\le2$.
Moreover, summing \eqref{eq:reverse-mean-bound} over
$m=k+1,\ldots,2k$ gives
\[
 \sum_{m=k+1}^{2k}\E[\Delta_{m,h}\mid\mathcal R_m]
 \le120\sqrt{d\sum_{i=1}^{2k}
 \one\{h(\widetilde X_i)\ne\widetilde Y_i\}}+7202d.
\]
Therefore, conditionally on $W_1,\ldots,W_{2k}$, with probability at least
$1-\delta$,
\begin{align}
 \frac1k\sum_{m=k+1}^{2k}\Delta_{m,h}
 &\le
 120\frac{\sqrt{d\sum_{i=1}^{2k}
 \one\{h(\widetilde X_i)\ne\widetilde Y_i\}}}{k}
 +7202\frac dk
 \notag\\
 &\quad+
 \sqrt{
 \left(
 \frac4k\sum_{i=1}^{2k}
 \one\{h(\widetilde X_i)\ne\widetilde Y_i\}
 +240\frac{\sqrt{d\sum_{i=1}^{2k}
 \one\{h(\widetilde X_i)\ne\widetilde Y_i\}}}{k}
 +14404\frac dk
 \right)
 \frac{\log(1/\delta)}k
 }
 \notag\\
 &\quad
 +\frac{2\log(1/\delta)}{3k}.
 \label{eq:reverse-freedman}
\end{align}
It remains to simplify the square root term.  On the event in
\eqref{eq:binomial-bound}, \eqref{eq:deterministic-remainder-bound} gives
\begin{align*}
 &\left(
 \frac4k\sum_{i=1}^{2k}
 \one\{h(\widetilde X_i)\ne\widetilde Y_i\}
 +240\frac{\sqrt{d\sum_{i=1}^{2k}
 \one\{h(\widetilde X_i)\ne\widetilde Y_i\}}}{k}
 +14404\frac dk
 \right)
 \frac{\log(1/\delta)}k
 \\
 &\qquad\le
 12\frac{L(h)\log(1/\delta)}{k}
 +12\left(\frac{\log(1/\delta)}k\right)^2
 \\
 &\qquad\quad{}+16000\left(
 \sqrt{\frac{L(h)(d+\log(1/\delta))}{k}}
 +\frac{d+\log(1/\delta)}{k}
 \right)\frac{\log(1/\delta)}k
 \\
 &\qquad\le
 16024\left(
 \sqrt{\frac{L(h)(d+\log(1/\delta))}{k}}
 +\frac{d+\log(1/\delta)}{k}
 \right)^2.
\end{align*}
The last inequality follows because both
$\sqrt{L(h)\log(1/\delta)/k}$ and $\log(1/\delta)/k$ are bounded by the
parenthesized expression.
Substituting these bounds into \eqref{eq:reverse-freedman} gives
\begin{align*}
 \frac1k\sum_{m=k+1}^{2k}\Delta_{m,h}
 &\le
 8000\left(
 \sqrt{\frac{L(h)(d+\log(1/\delta))}{k}}
 +\frac{d+\log(1/\delta)}{k}
 \right)
 \\
 &\quad+
 \sqrt{16024}\left(
 \sqrt{\frac{L(h)(d+\log(1/\delta))}{k}}
 +\frac{d+\log(1/\delta)}{k}
 \right)
 +\frac{2\log(1/\delta)}{3k}
 \\
 &<
 8200\left(
 \sqrt{\frac{L(h)(d+\log(1/\delta))}{k}}
 +\frac{d+\log(1/\delta)}{k}
 \right).
\end{align*}
The event in \eqref{eq:binomial-bound} has probability at least $1-\delta$,
and averaging
the conditional Freedman bound shows that \eqref{eq:reverse-freedman} holds
with probability at least $1-\delta$.  Thus both bounds hold simultaneously
with probability at least $1-2\delta$.  On their intersection, taking the
positive part in the preceding display gives \eqref{eq:reverse-bound}.  Since
$(Z_1,\ldots,Z_{2k})$ has the original i.i.d.\ law, the proof follows.
\end{proof}

\begin{proof}[Proof of \Cref{lem:forward}]
Fix $h\in\Hc$ and $0<\delta\le1/3$.  Conditionally on $S_{\le t}$, the prediction
$q_t$ is fixed and $Z_{t+1}$ is an independent observation.  Hence
\[
 \E[\Delta_{t+1,h}\mid S_{\le t}]
 =L(q_t)-L(h).
\]
Since $(a-b)^2\le a+b$ for $a,b\in[0,1]$, applying this inequality to the
losses of $q_t$ and $h$ gives
\begin{equation}\label{eq:forward-variance}
 \operatorname{Var}(\Delta_{t+1,h}\mid S_{\le t})
 \le \E[\Delta_{t+1,h}^2\mid S_{\le t}]
 \le L(q_t)+L(h)=L(q_t)-L(h)+2L(h).
\end{equation}
Therefore
\[
 \xi_{t+1,h}
 =L(q_t)-L(h)-\Delta_{t+1,h}
\]
is a martingale difference with respect to the sample prefixes.  It
satisfies $\xi_{t+1,h}\le2$ and, by \eqref{eq:forward-variance},
\[
 \E[\xi_{t+1,h}^2\mid S_{\le t}]
 \le L(q_t)-L(h)+2L(h).
\]
Fix $0<\lambda<3/2$.  The Bernstein moment generating bound
gives
\[
 \E\left[
 \exp\left(
 \lambda\xi_{t+1,h}
 -\frac{\lambda^2}{2(1-2\lambda/3)}
 \bigl(L(q_t)-L(h)+2L(h)\bigr)
 \right)
 \mathrel{}\middle|\mathrel{}S_{\le t}
 \right]
 \le1.
\]
Iterating this inequality over $t$ and applying Markov's
inequality shows that, with probability at least $1-\delta$,
\begin{equation}\label{eq:forward-mgf}
\sum_{t=k}^{2k-1}\bigl(L(q_t)-L(h)\bigr)
 -\sum_{m=k+1}^{2k}\Delta_{m,h}
\le
 \frac{\lambda}{2(1-2\lambda/3)}
 \left(
 \sum_{t=k}^{2k-1}\bigl(L(q_t)-L(h)\bigr)
 +2kL(h)
 \right)
 +\frac{\log(1/\delta)}{\lambda}.
\end{equation}
Choose
$\lambda=\min\left(\frac12,\sqrt{\frac{\log(1/\delta)}{kL(h)}}\right)$
when $L(h)>0$, and $\lambda=1/2$ when $L(h)=0$.  Since $\lambda\le1/2$,
simple algebra gives
\begin{equation}\label{eq:forward-prechoice}
 \frac1k\sum_{t=k}^{2k-1}\bigl(L(q_t)-L(h)\bigr)
 \le
 \frac2k
 \max\left(
 \sum_{m=k+1}^{2k}\Delta_{m,h},0
 \right)+3\lambda L(h)
 +2\frac{\log(1/\delta)}{\lambda k}.
\end{equation}
If $\lambda=\sqrt{\log(1/\delta)/(kL(h))}$, the last two terms in
\eqref{eq:forward-prechoice} equal
\[
 3\sqrt{\frac{L(h)\log(1/\delta)}{k}}
 +2\sqrt{\frac{L(h)\log(1/\delta)}{k}}
 =5\sqrt{\frac{L(h)\log(1/\delta)}{k}}.
\]
If $\lambda=1/2$, then $L(h)\le4\log(1/\delta)/k$, and
\[
 3\lambda L(h)+2\frac{\log(1/\delta)}{\lambda k}
 =\frac32L(h)+4\frac{\log(1/\delta)}k
 \le10\frac{\log(1/\delta)}k.
\]
This proves \eqref{eq:forward-bound}.
\end{proof}

\subsection{A deterministic binary predictor}

\Cref{thm:suffix-averaging} gives the desired bound for a randomized
predictor, while \Cref{thm:pac} requires a deterministic binary classifier.
In the realizable case, suffix averaging can be made binary by
majority vote at a factor of at most two in risk
\cite{AdenAliEtAl2023Optimal}.  Such a factor is harmless when $L^*=0$, but
would lose the coefficient one on $L^*$ in the agnostic case.  We instead
choose a threshold on an independent validation sample.

The final step is empirical risk minimization over the class obtained by
thresholding the fixed score.  For any reference threshold, the hypotheses
above and below it form two nested families.  This structure gives the local
entropy bound below without an additional logarithmic factor in the sample
size.  Applying this bound in
\cite[Theorem~3.3]{BartlettBousquetMendelson2005} gives the next lemma with
the displayed constants.

\begin{lemma}
\label{lem:threshold-validation}
Let $p:\mathcal X\to[-1,1]$ be measurable and fixed independently of an
i.i.d.\ validation sample $Z_i=(X_i,Y_i)$, $1\le i\le k$, where $k\ge1$.
Consider the binary threshold class
\[
 \mathcal T_p
 =\bigl\{x\mapsto2\one\{p(x)\ge u\}-1:-1\le u\le1\bigr\}
 \cup\{x\mapsto-1\}.
\]
If $\widehat h$ is an empirical risk minimizer over $\mathcal T_p$, then, for
every $0<\delta<1$, with probability at least $1-\delta$,
\[
 L(\widehat h)\le \inf_{h\in\mathcal T_p}L(h)+223000\left(
 \sqrt{\frac{\inf_{h\in\mathcal T_p}L(h)\log(12/\delta)}k}
 +\frac{\log(12/\delta)}k\right).
\]
\end{lemma}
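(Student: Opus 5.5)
The plan is to run the localized (local Rademacher) analysis of \cite[Theorem~3.3]{BartlettBousquetMendelson2005} on the binary loss class $\{\ell_h:h\in\mathcal T_p\}$, where $\ell_h(x,y)=\one\{h(x)\ne y\}$. Since $\ell_h\in\{0,1\}$, we have $P\ell_h^2=P\ell_h$, so the Bernstein condition $T(f)=Pf^2\le B\,Pf$ holds with $B=1$. We do not work with excess losses, since in the agnostic case their variance is not controlled by their mean.

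For every $K>1$, Theorem~3.3 then gives, with probability at least $1-\delta/2$ and simultaneously over $h\in\mathcal T_p$,
\[
 L(h)\le\frac{K}{K-1}\widehat L_k(h)+c_1K r^*+c_2\frac{K\log(12/\delta)}{k},
\]
where $\widehat L_k$ is the empirical risk and $r^*$ is the fixed point of a sub-root upper bound $\psi(r)$ on the local Rademacher average $\E\Rad_k\{\ell_h:P\ell_h\le r\}$. The estimator does not depend on $K$, so $K$ may be chosen in the analysis as a function of $L_0=\inf_{h\in\mathcal T_p}L(h)$. Let $h_0$ attain (or nearly attain) $L_0$; the class is closed under the limits needed, or we take a near minimizer. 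We then:
\begin{itemize}
\item use $\widehat L_k(\widehat h)\le\widehat L_k(h_0)$;
\item bound $\widehat L_k(h_0)\le L_0+\sqrt{2L_0\log(2/\delta)/k}+\log(2/\delta)/k$ by Bernstein's inequality;
\item take $K=1+\sqrt{kL_0/\log(12/\delta)}$, or $K=2$ when $L_0$ is below $\log(12/\delta)/k$.
\end{itemize}
With this choice, $\frac{1}{K-1}L_0$ and $K\log(12/\delta)/k$ are both of order $\sqrt{L_0\log(12/\delta)/k}+\log(12/\delta)/k$. This gives the claim, provided $Kr^*$ is of the same order, which reduces to showing $r^*\lesssim1/k$.

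The heart of the argument is the bound $\psi(r)\lesssim\sqrt{r/k}+1/k$ with no $\log k$ factor. First, fix any $h'$ in the localized set $\{h:P\ell_h\le r\}$. Every other $h$ in this set satisfies $P(h\ne h')\le P\ell_h+P\ell_{h'}\le2r$. Moreover, $\ell_h-\ell_{h'}=\one\{h\ne h'\}\,(\ell_h-\ell_{h'})$ is a contraction of the disagreement indicator. It therefore suffices to bound the Rademacher average of $\{\one\{h\ne h'\}:P(h\ne h')\le2r\}$, plus the single term $\ell_{h'}$, which contributes $\lesssim\sqrt{r/k}$.

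Second, the thresholds are totally ordered, so the disagreement regions split into two families. For thresholds above that of $h'$, they are the sets $\{u'\le p<u\}$; for thresholds below, they are $\{u\le p<u'\}$. In each family the sets are nested and increasing in $|u-u'|$.

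Third, for a nested chain $A_1\subseteq A_2\subseteq\cdots$ with $P(A_j)\le2r$, I would show
\[
\E\sup_j\frac1k\sum_i\varepsilon_i\one_{A_j}(X_i)\lesssim\sqrt{r/k}.
\]
Conditionally on the sample, the partial sums over the nested sets form a random walk indexed by the ordered points falling in the largest set. Doob's (or L\'evy's) maximal inequality then bounds the supremum by a constant times the $L^2$ norm of the endpoint, which is $\sqrt{k\cdot2r}/k$ in expectation. Combining the two families yields $\psi(r)=C(\sqrt{r/k}+1/k)$, which is sub-root, and hence $r^*\le C'/k$.

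I expect this chain maximal inequality, which replaces the generic entropy bound that would introduce $\log(k)$, to be the main obstacle. The rest is bookkeeping of constants: the two failure events of probability $\delta/2$, plus the internal union bounds in Theorem~3.3, account for the $12/\delta$. I would also double-check the degenerate regime $L_0\lesssim\log(12/\delta)/k$, where choosing $K=2$ makes the bound purely fast-rate.
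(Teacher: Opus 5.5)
Your proposal is correct, but it differs from the paper's proof in two places.

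\textbf{The loss class.} The paper does not use raw losses. It applies \cite[Theorem~3.3]{BartlettBousquetMendelson2005} to the shifted excess losses $g_h+2L(h^*)$, where $g_h(x,y)=\one\{h(x)\ne y\}-\one\{h^*(x)\ne y\}$. The shift answers your objection to excess losses, since $\operatorname{Var}(g_h)\le\Prob(h(X)\ne h^*(X))\le L(h)+L(h^*)=\E\bigl(g_h+2L(h^*)\bigr)$. So one gets $B=1$ with $T(f)=\E g_h^2$, and the localization is by disagreement with $h^*$. Empirical optimality then gives the deterministic inequality $\frac1k\sum_i\bigl(g_{\widehat h}(Z_i)+2L(h^*)\bigr)\le2L(h^*)$. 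This yields the term $2L(h^*)/(K-1)$ with no concentration step for the comparator.

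Your raw-loss version makes the Bernstein condition trivial. Because it localizes on risk level sets $\{L(h)\le r\}$, it needs the following extra steps, all of which go through:
\begin{itemize}
\item re-centering at an arbitrary $h'$ in the level set;
\item the identity $\ell_h-\ell_{h'}=y\,h'(x)\one\{h(x)\ne h'(x)\}$, a fixed sign flip that leaves the Rademacher average unchanged;
\item a separate Bernstein bound for $\widehat L_k(h_0)$, plus a union bound.
\end{itemize}

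\textbf{The local Rademacher bound.} The paper uses chaining. On each branch the empirical $L_2$ distance is $\sqrt{|t_h-t_{h'}|}$, so the covering numbers are at most $1+q/u^2$ and the entropy integral is of order $\sqrt{q/k}$. Your Doob maximal inequality, applied to the random walk indexed by the $p$-ordered sample points, is more elementary. It also gives a much smaller fixed point than the paper's $784/k$, so the explicit constant $223000$ is met with room to spare. Since the statement is quantitative, you should still carry out that bookkeeping rather than leave it as ``$\lesssim$''.
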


\begin{proof}
Assume first that the infimum is attained and let $h^*$ be a minimizer.
Otherwise, take $h^*$ with risk arbitrarily close to the infimum and let the
approximation error tend to zero.
For $h\in\mathcal T_p$, set
\[
 g_h(x,y)=\one\{h(x)\ne y\}-\one\{h^*(x)\ne y\}.
\]
Then
$\E(g_h(X,Y)+2L(h^*))=L(h)+L(h^*)$ and
\begin{align*}
 \operatorname{Var}(g_h(X,Y)+2L(h^*))
 &=\operatorname{Var}(g_h(X,Y))
 \le \E g_h(X,Y)^2 \\
 &=\Prob(h(X)\ne h^*(X))
 \le L(h)+L(h^*)
 =\E(g_h(X,Y)+2L(h^*)).
\end{align*}
Order the threshold chain pointwise.  Relative to $h^*$, the chain has two branches.
The upper branch consists of the hypotheses $h$ satisfying
$h(x)\ge h^*(x)$ for every $x\in\mathcal X$, and the lower branch consists
of those satisfying $h(x)\le h^*(x)$ for every $x\in\mathcal X$.  Within
either branch the sets $\DIS(h,h^*)$ are nested,
while sets from opposite branches are disjoint.  For every $(x,y)$,
$g_h(x,y)=-y\one_{\DIS(h,h^*)}(x)$ on the upper branch and
$g_h(x,y)=y\one_{\DIS(h,h^*)}(x)$ on the lower branch.

For $s>0$, let $\mathcal G_s=\{g_h:\E g_h(X,Y)^2\le s\}$.  Let $A_s^+$ be
the union of $\DIS(h,h^*)$ over the upper branch with
$g_h\in\mathcal G_s$, and
define $A_s^-$ analogously over the lower branch.  For every such $h$,
$\Prob(X\in\DIS(h,h^*))=\E g_h(X,Y)^2\le s$.  For these nested threshold sets, the
probability of each union is the supremum of the probabilities of its
members.  Thus $\Prob(X\in A_s^+)\le s$ and
$\Prob(X\in A_s^-)\le s$.  Since sets
from opposite branches are disjoint, $A_s^+\cap A_s^-=\varnothing$.  Define
\[
 q=\frac1k\sum_{i=1}^k\one\{X_i\in A_s^+\cup A_s^-\},
\]
and let $t_h$ be the empirical mass of $\DIS(h,h^*)$, with positive sign for
the upper branch and negative sign for the lower branch.  Since the
disagreement sets are nested within each branch and disjoint across branches,
for $g_h,g_{h'}\in\mathcal G_s$,
\[
 \frac1k\sum_{i=1}^k
 \bigl(g_h(Z_i)-g_{h'}(Z_i)\bigr)^2=|t_h-t_{h'}|.
\]
The coordinates $t_h$ lie in an interval of length at most $q$.  Hence, for
$0<u\le\sqrt q$, the class $\mathcal G_s$ has an empirical $L_2$ cover of
radius $u$ and size at most $1+q/u^2$.  Applying the chaining argument
conditionally to the Rademacher process and retaining the numerical constant
in the proof of
\cite[Section~3.2]{DevroyeLugosi2001} gives
\begin{align*}
 \E_{\varepsilon}\sup_{g\in\mathcal G_s}\frac1k\sum_{i=1}^k
 \varepsilon_i g(Z_i) \le \frac{12}{\sqrt k}\int_0^{\sqrt q}
 \sqrt{\log\left(1+\frac{q}{u^2}\right)}\,du
 \le12\sqrt{\frac{(2+\log 2)q}{k}}.
\end{align*}
Taking expectation over the validation sample and using
$\E q=\Prob(X\in A_s^+\cup A_s^-)\le2s$ shows that the last display is less
than $28\sqrt{s/k}$.  The common translation by $2L(h^*)$ has zero expected
Rademacher sum.  Finally, apply the first part of
\cite[Theorem~3.3]{BartlettBousquetMendelson2005} to the class
$\{g_h+2L(h^*):h\in\mathcal T_p\}$, with
$T(g_h+2L(h^*))=\E g_h(X,Y)^2$ and $B=1$.  The subroot function
$s\mapsto28\sqrt{s/k}$ has fixed point $784/k$.  Empirical optimality gives
\[
 \frac1k\sum_{i=1}^k\bigl(g_{\widehat h}(Z_i)+2L(h^*)\bigr)
 \le\frac1k\sum_{i=1}^k\bigl(g_{h^*}(Z_i)+2L(h^*)\bigr)
 =2L(h^*).
\]
Since the range of this class has length two, the cited theorem shows that,
for every fixed $K>1$, with probability at least $1-\delta$,
\[
 L(\widehat h)-L(h^*)
 \le \frac{2L(h^*)}{K-1}
 +\frac{K(551936+26b)+22b}{k},
 \qquad b=\log(12/\delta).
\]
For $L(h^*)>0$, choose
$K=1+\sqrt{2L(h^*)k/(551936+26b)}$; for $L(h^*)=0$, take
$K=1+(551936+26b)^{-1}$.  Since $b\ge\log 12$ and
$551936+26b<222142b$, substitution gives the stated bound.
\end{proof}

\begin{proof}[Proof of \Cref{thm:pac}]
For $n\ge6$, let $k=\lfloor n/3\rfloor$.  Form the $[-1,1]$-valued
predictor $\widehat p$ in \eqref{eq:suffix-average} from observations
$1,\ldots,2k-1$, and use the next $k$ observations,
$2k,\ldots,3k-1$, as an independent validation block.  List the distinct
values $\widehat p(X_i)$ on the validation block in increasing order.  For
each listed value $u$, include the classifier
$x\mapsto2\one\{\widehat p(x)\ge u\}-1$, together with the rule identically
equal to $-1$, and choose any empirical risk
minimizer $\widehat h$ from this finite list, with ties resolved by a fixed
rule. 
Consequently, $\widehat h$ is an empirical risk minimizer over
$\mathcal T_{\widehat p}$.  With the fixed rule for resolving ties,
$\widehat h$ is a deterministic function of the sample and is the classifier
output by the learner. For every $x\in\mathcal X$ and $y\in\{-1,+1\}$,
\[
 \frac12\int_{-1}^1
 \one\{2\,\one\{\widehat p(x)\ge u\}-1\ne y\}\,du
 =\frac12|\widehat p(x)-y|.
\]
Integrating this identity and applying Fubini's theorem gives
\[
 \frac12\int_{-1}^1
 L\bigl(x\mapsto2\one\{\widehat p(x)\ge u\}-1\bigr)\,du
 =L(\widehat p).
\]
Therefore, we have
$\inf_{h\in\mathcal T_{\widehat p}}L(h)\le L(\widehat p)$.

Condition on the first $2k-1$ observations, so that $\widehat p$ is fixed and
the validation block remains i.i.d.  Applying
\Cref{lem:threshold-validation} with confidence $\delta/4$, and
using the preceding inequality, gives
\[
 L(\widehat h)\le L(\widehat p)
 +223000\left(
 \sqrt{\frac{L(\widehat p)\log(48/\delta)}{k}}
 +\frac{\log(48/\delta)}k
 \right).
\]
We apply \Cref{thm:suffix-averaging} with confidence $\delta/16$.  On the event
supplied by this theorem,
\[
 L(\widehat p)
 \le L^*+17000\left(
 \sqrt{\frac{L^*(d+\log(48/\delta))}{k}}
 +\frac{d+\log(48/\delta)}{k}
 \right),
\]
and the inequality $\sqrt{a+b}\le\sqrt a+\sqrt b$ gives
\begin{align*}
 \sqrt{\frac{L(\widehat p)\log(48/\delta)}{k}}
 &\le\sqrt{\frac{L^*\log(48/\delta)}{k}}
 \\
 &\quad+\sqrt{\frac{17000\log(48/\delta)}k
 \left(
 \sqrt{\frac{L^*(d+\log(48/\delta))}{k}}
 +\frac{d+\log(48/\delta)}{k}
 \right)}
 \\
 &\le(1+\sqrt{17000})\left(
 \sqrt{\frac{L^*(d+\log(48/\delta))}{k}}
 +\frac{d+\log(48/\delta)}{k}
 \right)
 \\
 &<132\left(
 \sqrt{\frac{L^*(d+\log(48/\delta))}{k}}
 +\frac{d+\log(48/\delta)}{k}
 \right).
\end{align*}
For the second inequality, we used
$\log(48/\delta)/k\le(d+\log(48/\delta))/k$, which is at most the
expression in parentheses.  Substituting these bounds into the validation
inequality gives
\[
 L(\widehat h)<L^*+3.1\cdot10^7\left(
 \sqrt{\frac{L^*(d+\log(48/\delta))}{k}}
 +\frac{d+\log(48/\delta)}{k}
 \right).
\]
A union bound shows that the two events hold simultaneously with probability
at least $1-5\delta/16$, and hence at least $1-\delta$.  Since $k\ge n/6$,
the last display implies \eqref{eq:pac-main}.  When $n<6$, return the
classifier identically equal to $-1$.  The error term in
\eqref{eq:pac-main} exceeds one, so the bound is immediate.

Finally, suppose that $0<\delta\le1/2$.  Since
$d+\log(1/\delta)\ge1+\log 2$ and
$\log 48<\frac52(1+\log 2)$,
we have
$
 d+\log(48/\delta)
 \le\frac72\bigl(d+\log(1/\delta)\bigr).
$
Thus \eqref{eq:pac-main} also implies the bound stated in the abstract.
\end{proof}

\section*{Acknowledgements}

The work of Markus Engelund Mathiasen is supported by the European Union (ERC, TUCLA, 101125203).  Views
and opinions expressed are those of the authors only and do not necessarily
reflect those of the European Union or the European Research Council.  Neither
the European Union nor the granting authority can be held responsible for
them.

{\scriptsize
\let\originalthebibliography\thebibliography
\renewcommand{\thebibliography}[1]{%
 \originalthebibliography{#1}%
 \setlength{\itemsep}{0pt}}
\bibliographystyle{plainurl}
\bibliography{references}}

@inproceedings{AdenAliEtAl2023Optimal,
  author    = {Ishaq Aden-Ali and Yeshwanth Cherapanamjeri and Abhishek Shetty and Nikita Zhivotovskiy},
  title     = {Optimal {PAC} Bounds without Uniform Convergence},
  booktitle = {2023 IEEE 64th Annual Symposium on Foundations of Computer Science (FOCS)},
  year      = {2023},
  pages     = {1203--1223},
  publisher = {IEEE Computer Society},
  address   = {Los Alamitos, CA, USA},
  doi       = {10.1109/FOCS57990.2023.00071}
}

@inproceedings{AdenAliEtAl2023Negative,
  author    = {Ishaq Aden-Ali and Yeshwanth Cherapanamjeri and Abhishek Shetty and Nikita Zhivotovskiy},
  title     = {The {One-Inclusion Graph} Algorithm is not Always Optimal},
  booktitle = {Proceedings of Thirty Sixth Conference on Learning Theory},
  year      = {2023},
  editor    = {Gergely Neu and Lorenzo Rosasco},
  volume    = {195},
  series    = {Proceedings of Machine Learning Research},
  pages     = {72--88},
  publisher = {PMLR},
  url       = {https://proceedings.mlr.press/v195/aden-ali23a.html}
}

@inproceedings{AlonEtAl2021,
  author    = {Noga Alon and Omri Ben-Eliezer and Yuval Dagan and Shay Moran and Moni Naor and Eylon Yogev},
  title     = {Adversarial Laws of Large Numbers and Optimal Regret in Online Classification},
  booktitle = {Proceedings of the 53rd Annual ACM SIGACT Symposium on Theory of Computing},
  series    = {STOC 2021},
  year      = {2021},
  pages     = {447--455},
  publisher = {Association for Computing Machinery},
  address   = {New York, NY, USA},
  isbn      = {9781450380539},
  doi       = {10.1145/3406325.3451041}
}

@inproceedings{AsilisEtAl2024,
  author    = {Julian Asilis and Siddartha Devic and Shaddin Dughmi and Vatsal Sharan and Shang-Hua Teng},
  title     = {Regularization and Optimal Multiclass Learning},
  booktitle = {Proceedings of Thirty Seventh Conference on Learning Theory},
  year      = {2024},
  editor    = {Shipra Agrawal and Aaron Roth},
  volume    = {247},
  series    = {Proceedings of Machine Learning Research},
  pages     = {260--310},
  publisher = {PMLR},
  url       = {https://proceedings.mlr.press/v247/asilis24a.html}
}

@inproceedings{AsilisHogsgaardVelegkas2025,
  author        = {Julian Asilis and Mikael M{\o}ller H{\o}gsgaard and Grigoris Velegkas},
  title         = {On Agnostic {PAC} Learning in the Small Error Regime},
  booktitle     = {Advances in Neural Information Processing Systems},
  year          = {2025},
  volume        = {38},
  pages         = {123346--123388},
  publisher     = {Curran Associates, Inc.},
  url           = {https://papers.nips.cc/paper_files/paper/2025/hash/b2a2bd5d5051ff6af52e1ef60aefd255-Abstract-Conference.html}
}

@article{Audibert2009,
  author  = {Jean-Yves Audibert},
  title   = {Fast Learning Rates in Statistical Inference through Aggregation},
  journal = {The Annals of Statistics},
  year    = {2009},
  volume  = {37},
  number  = {4},
  pages   = {1591--1646},
  doi     = {10.1214/08-AOS623}
}

@article{BartlettBousquetMendelson2005,
  author  = {Peter L. Bartlett and Olivier Bousquet and Shahar Mendelson},
  title   = {Local {Rademacher} Complexities},
  journal = {The Annals of Statistics},
  year    = {2005},
  volume  = {33},
  number  = {4},
  pages   = {1497--1537},
  doi     = {10.1214/009053605000000282}
}

@article{BlumerEhrenfeuchtHausslerWarmuth1989,
  author  = {Anselm Blumer and Andrzej Ehrenfeucht and David Haussler and Manfred K. Warmuth},
  title   = {Learnability and the {Vapnik--Chervonenkis} Dimension},
  journal = {Journal of the ACM},
  year    = {1989},
  volume  = {36},
  number  = {4},
  pages   = {929--965},
  doi     = {10.1145/76359.76371}
}

@article{BoucheronBousquetLugosi2005,
  author  = {St{\'e}phane Boucheron and Olivier Bousquet and G{\'a}bor Lugosi},
  title   = {Theory of Classification: A Survey of Some Recent Advances},
  journal = {ESAIM: Probability and Statistics},
  year    = {2005},
  volume  = {9},
  pages   = {323--375},
  doi     = {10.1051/ps:2005018}
}

@book{DevroyeGyorfiLugosi1996,
  author    = {Luc Devroye and L{\'a}szl{\'o} Gy{\"o}rfi and G{\'a}bor Lugosi},
  title     = {A Probabilistic Theory of Pattern Recognition},
  series    = {Stochastic Modelling and Applied Probability},
  volume    = {31},
  publisher = {Springer},
  address   = {New York, NY},
  year      = {1996},
  doi       = {10.1007/978-1-4612-0711-5}
}

@article{DevroyeLugosi1995,
  author  = {Luc Devroye and G{\'a}bor Lugosi},
  title   = {Lower Bounds in Pattern Recognition and Learning},
  journal = {Pattern Recognition},
  year    = {1995},
  volume  = {28},
  number  = {7},
  pages   = {1011--1018},
  doi     = {10.1016/0031-3203(94)00141-8}
}

@book{DevroyeLugosi2001,
  author    = {Luc Devroye and G{\'a}bor Lugosi},
  title     = {Combinatorial Methods in Density Estimation},
  series    = {Springer Series in Statistics},
  publisher = {Springer},
  address   = {New York, NY},
  year      = {2001},
  doi       = {10.1007/978-1-4613-0125-7}
}

@book{Diestel2017,
  author    = {Reinhard Diestel},
  title     = {Graph Theory},
  edition   = {5th},
  series    = {Graduate Texts in Mathematics},
  volume    = {173},
  publisher = {Springer},
  address   = {Berlin, Heidelberg},
  year      = {2017},
  doi       = {10.1007/978-3-662-53622-3}
}

@inproceedings{DughmiKalayciYork2025,
  author    = {Shaddin Dughmi and Yusuf Hakan Kalayci and Grayson York},
  title     = {Is Transductive Learning Equivalent to {PAC} Learning?},
  booktitle = {Proceedings of The 36th International Conference on Algorithmic Learning Theory},
  year      = {2025},
  editor    = {Gautam Kamath and Po-Ling Loh},
  volume    = {272},
  series    = {Proceedings of Machine Learning Research},
  pages     = {418--443},
  publisher = {PMLR},
  url       = {https://proceedings.mlr.press/v272/dughmi25a.html}
}

@article{Dudley1978,
  author  = {Richard M. Dudley},
  title   = {Central Limit Theorems for Empirical Measures},
  journal = {The Annals of Probability},
  year    = {1978},
  volume  = {6},
  number  = {6},
  pages   = {899--929},
  doi     = {10.1214/aop/1176995384}
}

@inproceedings{FilmusEtAl2023,
  author    = {Yuval Filmus and Steve Hanneke and Idan Mehalel and Shay Moran},
  title     = {Optimal Prediction Using Expert Advice and Randomized {Littlestone} Dimension},
  booktitle = {Proceedings of Thirty Sixth Conference on Learning Theory},
  year      = {2023},
  editor    = {Gergely Neu and Lorenzo Rosasco},
  volume    = {195},
  series    = {Proceedings of Machine Learning Research},
  pages     = {773--836},
  publisher = {PMLR},
  url       = {https://proceedings.mlr.press/v195/filmus23a.html}
}

@article{Freedman1975,
  author  = {David A. Freedman},
  title   = {On Tail Probabilities for Martingales},
  journal = {The Annals of Probability},
  year    = {1975},
  volume  = {3},
  number  = {1},
  pages   = {100--118},
  doi     = {10.1214/aop/1176996452}
}

@article{Hanneke2016Optimal,
  author  = {Steve Hanneke},
  title   = {The Optimal Sample Complexity of {PAC} Learning},
  journal = {Journal of Machine Learning Research},
  year    = {2016},
  volume  = {17},
  number  = {38},
  pages   = {1--15},
  url     = {https://www.jmlr.org/papers/v17/15-389.html}
}

@inproceedings{HannekeLarsenZhivotovskiy2024,
  author    = {Steve Hanneke and Kasper Green Larsen and Nikita Zhivotovskiy},
  title     = {Revisiting Agnostic {PAC} Learning},
  booktitle = {2024 IEEE 65th Annual Symposium on Foundations of Computer Science (FOCS)},
  year      = {2024},
  pages     = {1968--1982},
  publisher = {IEEE Computer Society},
  address   = {Los Alamitos, CA, USA},
  doi       = {10.1109/FOCS61266.2024.00118}
}

@article{HLW1994,
  author  = {David Haussler and Nick Littlestone and Manfred K. Warmuth},
  title   = {Predicting $\{0,1\}$-Functions on Randomly Drawn Points},
  journal = {Information and Computation},
  year    = {1994},
  volume  = {115},
  number  = {2},
  pages   = {248--292},
  doi     = {10.1006/inco.1994.1097}
}

@inproceedings{Larsen2023Bagging,
  author    = {Kasper Green Larsen},
  title     = {Bagging is an Optimal {PAC} Learner},
  booktitle = {Proceedings of Thirty Sixth Conference on Learning Theory},
  year      = {2023},
  editor    = {Gergely Neu and Lorenzo Rosasco},
  volume    = {195},
  series    = {Proceedings of Machine Learning Research},
  pages     = {450--468},
  publisher = {PMLR},
  url       = {https://proceedings.mlr.press/v195/larsen23a.html}
}

@article{Long1999,
  author  = {Philip M. Long},
  title   = {The Complexity of Learning According to Two Models of a Drifting Environment},
  journal = {Machine Learning},
  year    = {1999},
  volume  = {37},
  number  = {3},
  pages   = {337--354},
  doi     = {10.1023/A:1007666507971},
  note    = {Expanded version of the paper in the Proceedings of the Eleventh Annual Conference on Computational Learning Theory (COLT 1998), pp. 116--125, doi:10.1145/279943.279968}
}

@book{ODonnell2014,
  author    = {Ryan O'Donnell},
  title     = {Analysis of Boolean Functions},
  publisher = {Cambridge University Press},
  address   = {New York, NY},
  year      = {2014},
  doi       = {10.1017/CBO9781139814782}
}

@misc{RawalZhivotovskiy2026,
  author        = {Divit Rawal and Nikita Zhivotovskiy},
  title         = {{Majority-of-Three} is Optimal},
  year          = {2026},
  eprint        = {2606.13614},
  archivePrefix = {arXiv},
  primaryClass  = {stat.ML},
  note          = {Version 1, submitted 11 June 2026}
}

@inproceedings{Simon2015,
  author    = {Hans U. Simon},
  title     = {An Almost Optimal {PAC} Algorithm},
  booktitle = {Proceedings of the 28th Conference on Learning Theory},
  year      = {2015},
  editor    = {Peter Gr{\"u}nwald and Elad Hazan and Satyen Kale},
  volume    = {40},
  series    = {Proceedings of Machine Learning Research},
  pages     = {1552--1563},
  publisher = {PMLR},
  url       = {https://proceedings.mlr.press/v40/Simon15a.html}
}

@article{Valiant1984,
  author  = {Leslie G. Valiant},
  title   = {A Theory of the Learnable},
  journal = {Communications of the ACM},
  year    = {1984},
  volume  = {27},
  number  = {11},
  pages   = {1134--1142},
  doi     = {10.1145/1968.1972}
}

@article{VapnikChervonenkis1971,
  author  = {Vladimir N. Vapnik and Alexey Ya. Chervonenkis},
  title   = {On the Uniform Convergence of Relative Frequencies of Events to Their Probabilities},
  journal = {Theory of Probability \& Its Applications},
  year    = {1971},
  volume  = {16},
  number  = {2},
  pages   = {264--280},
  doi     = {10.1137/1116025}
}

\appendix

\section{Development of the proof and AI disclosure}

This project began in spring 2026 while the first author was visiting UC
Berkeley.  From the outset, the authors sought to combine two
ideas.
The Boolean cube orientation of Long \cite{Long1999} gives an agnostic leave
one out bound of the correct order, but with a coefficient greater than one
on the empirical optimum.  Dughmi, Kalayci, and York
\cite{DughmiKalayciYork2025} relate the discounted edge density of the agnostic
one inclusion graph to empirical Rademacher complexity, using the full
coordinate set.

An analogy with agnostic online classification also proved useful.  Alon,
Ben-Eliezer, Dagan, Moran, Naor, and Yogev \cite{AlonEtAl2021} obtain the
optimal horizon dependent regret bound in terms of Littlestone dimension.
Filmus, Hanneke, Mehalel, and Moran \cite{FilmusEtAl2023} localize this
dependence to the $k$ mistakes of the best comparator and obtain the optimal
randomized expected mistake bound
$k+\Theta(\sqrt{k\operatorname{Ldim}(\Hc)}+\operatorname{Ldim}(\Hc))$.
This suggested the localized bound in
\Cref{lem:edge-rad}.

The main obstacle was whether an isoperimetric result such as
\Cref{lem:edge-rad} could hold at all.  In May 2026, the authors used OpenAI
GPT-5.5 Pro to study simple VC classes such as 
thresholds and intervals.  In early July, automated runs with the same model
suggested an argument for classes of VC dimension one based on their tree
structure.  The authors then examined axis parallel
rectangles.  In early August, experiments with GPT-5.6 Sol in Ultra mode
helped resolve the rectangle case.  These examples led the authors to the general random
restriction argument in \Cref{lem:edge-rad}. 

The PAC conversion combines suffix averaging
from Aden-Ali, Cherapanamjeri, Shetty, and Zhivotovskiy
\cite{AdenAliEtAl2023Optimal} with a comparator dependent martingale bound.

The single prompt below compresses these
experiments.  It states \Cref{lem:edge-rad} without a proof hint and gives
more detailed hints for the remaining steps.  On August 6, 2026, the authors
tested it in 16 separate runs
with GPT-5.6 Sol in Pro mode.  In 11 of these runs, the model
produced an essentially correct proof outline for
\Cref{lem:edge-rad} and completed the remaining steps.  The runs took
123 minutes on average.
\vskip+13pt
\begin{promptbox}
Let \(\mathcal H\subseteq\{-1,+1\}^{\mathcal X}\) have VC dimension
\(d\ge1\). Write \(L(h)=\mathbb P(h(X)\ne Y)\) and
\(L^*=\min_{h\in\mathcal H}L(h)\), assuming measurability and attainment.
Given an i.i.d. sample of size \(n\), construct a deterministic, generally
improper classifier \(\widehat h:\mathcal X\to\{-1,+1\}\), without using
\(L^*\) or \(\delta\), such that for every distribution, \(n\ge1\), and
\(\delta\in(0,1)\), with probability at least \(1-\delta\),
\[
L(\widehat h)\le L^*+C\left(
\sqrt{\frac{L^*(d+\log(1/\delta))}{n}}
+\frac{d+\log(1/\delta)}{n}
\right),
\]
where \(C>0\) is a universal constant. Give a complete proof.

Hints. 1) First construct a coefficient-one fractional orientation of the
full Boolean cube. Let
\[
V\subseteq\{-1,+1\}^m
\]
be a finite class with \(\operatorname{VC}(V)\le d\). For
\(y\in\{-1,+1\}^m\), define
\[
\rho_V(y)=\min_{v\in V}d_H(y,v).
\]
For \(D\subseteq[m]\), define the projected Rademacher width
\[
\operatorname{Rad}_D(V)
=\mathbb E_\varepsilon
\sup_{v\in V}
\sum_{i\in D}\varepsilon_i v_i.
\]
Prove the following stronger statement. There exist weights
\[
w_{y,i}\in[0,1],
\qquad y\in\{-1,+1\}^m,\ i\in[m],
\]
such that
\[
w_{y,i}+w_{y^{\oplus i},i}=1
\]
for every vertex \(y\) and coordinate \(i\), and such that for every
\(y\in\{-1,+1\}^m\) and every coordinate set \(D\subseteq[m]\),
\[
\sum_{i\in D}w_{y,i}
\le
\rho_V(y)+C_1\operatorname{Rad}_D(V).
\]
2) For \(U\subseteq\{-1,+1\}^m\), with \(D_U(y)=\{i:y^{\oplus i}\in U\}\)
and internal cube edges \(E_U\), deduce
\(|E_U|\le\sum_{y\in U}(\rho_V(y)+C_1\operatorname{Rad}_{D_U(y)}(V))\).
To localize the Rademacher term, use that \(\rho_V\) is \(1\)-Lipschitz
along cube edges, bound the VC dimension of the Hamming neighborhood
\(\{y:\rho_V(y)\le t\}\) by \(C(d+t)\), and combine the edge-density bound
for VC classes with Cauchy--Schwarz over the level sets of \(\rho_V\).
3) Combine this with \(\operatorname{Rad}_D(V)\le C\sqrt{d|D|}\) and the
Hall orientation criterion to obtain
\(\operatorname{out}(y)\le\rho_V(y)+C(\sqrt{d\rho_V(y)}+d)\); apply this
to the trace of \(\mathcal H\) to obtain a coefficient-one leave-one-out
bound.
4) Symmetrize the rule and average over the half-sample suffix
\(t=k,\ldots,2k-1\); use a backward martingale to pass from the suffix
leave-one-out bounds to the realized held-out errors, and a forward
martingale to pass to the population risk; then derandomize by relative
validation over the nested threshold classifiers, without a logarithmic
factor. Fix all tie rules.
\end{promptbox}
\end{document}